\newcommand{\tx}{\bm{x}}
\newcommand{\ty}{\bm{y}}
\newcommand{\tb}{\bm{b}}
\newcommand{\tH}{\textbf{H}}
\newcommand{\tD}{\textbf{D}}
\newcommand{\tA}{\textbf{A}}
\newcommand{\tI}{\textbf{I}}
\newcommand{\tW}{\textbf{W}}
\newcommand{\tTheta}{\mathbf{\Theta}}
\newcommand{\tv}{\bm{v}}
\newcommand{\tn}{\bm{n}}
\newcommand{\tu}{\bm{u}}
\DeclareMathOperator*{\argmin}{argmin}
\DeclareMathOperator*{\argmax}{argmax}
\newtheorem{thm}{Theorem}
\title{Denoising Prior Driven Deep Neural Network for Image Restoration}
\author{ Weisheng~Dong, Peiyao Wang, Wotao Yin, Guangming Shi, Fangfang Wu, and Xiaotong Lu}
\begin{document}
\maketitle

\ifthenelse {\boolean{doublecolumn}} {} {\baselineskip=.8cm}

\begin{abstract}

Deep neural networks (DNNs) have shown very promising results for various image restoration (IR) tasks. However, the design of network architectures remains a major challenging for achieving further improvements. While most existing DNN-based methods solve the IR problems by directly mapping low quality images to desirable high-quality images, the observation models characterizing the image degradation processes have been largely ignored. In this paper, we first propose a denoising-based IR algorithm, whose iterative steps can be computed efficiently. Then, the iterative process is unfolded into a deep neural network, which is composed of multiple denoisers modules interleaved with back-projection (BP) modules that ensure the observation consistencies. A convolutional neural network (CNN) based denoiser that can exploit the multi-scale redundancies of natural images is proposed. As such, the proposed network not only exploits the powerful denoising ability of DNNs, but also leverages the prior of the observation model. Through end-to-end training, both the denoisers and the BP modules can be jointly optimized. Experimental results on several IR tasks, e.g., image denoisig, super-resolution and deblurring show that the proposed method can lead to very competitive and often state-of-the-art results on several IR tasks, including image denoising, deblurring and super-resolution.

\end{abstract}

\begin{IEEEkeywords}
denoising-based image restoration, deep neural network, denoising prior, image restoration.
\end{IEEEkeywords}

\IEEEpeerreviewmaketitle

\section{Introduction}

Image restoration (IR) aiming to reconstruct a high quality image from its low quality observation has many important applications, such as low-level image processing, medical imaging, remote sensing, surveillance, etc. Mathematically, IR problem can be expressed as $\ty=\tA\tx+\tn$, where $\ty$ and $\tx$ denote the degraded image and the original image, respectively, $\tA$ denotes the degradation matrix relating to an imaging/degradation system, and $\tn$ denotes the additive noise. Note that for different settings of $\tA$, different IR problems can be expressed. For example, the IR problem is a denoising problem \cite{KSVD, BM3D, CSR, LASSC, WNNM} when $\tA$ is an identical matrix and becomes a deblurring problem \cite{Dong:TIP11,IDDBM3D,NCSR,Dong:IJCV15} when $\tA$ is a blurring matrix/operator, or a super-resolution problem \cite{TVSR, Yang:SR08, NCSR, Gao:TIP12} when $\tA$ is a subsampling matrix/operator. Essentially, restoring $\tx$ from $\ty$ is a challenging ill-posed inverse problem. In the past a few decades, the IR problems have been extensively studied. However, they still remain as an active research area.

Generally, existing IR methods can be classified into two main categories, i.e., model-based methods \cite{Osher:TV05,KSVD,ISTA,Mairal:TIP08,NCSR,Zoran:ICCV11,Dong:IJCV15,Roth:IJCV09,Yu:TIP12} and learning-based methods \cite{Freeman:02,Aplus,Schmidt:CVPR14,SRCNN,Wang:CVPR15,Zhang:TIP17}. The model-based methods attack this problem by solving an optimization problem, which is often constructed from a Bayesian perspective. In the Bayesian setting, the solution is obtained by maximizing the posterior $P(\tx|\ty)$, which can be formulated as
\begin{equation}
\tx   =  \argmax_{\tx} \log P(\tx|\ty) = \argmax_{\tx} \log P(\ty|\tx) + \log P(\tx), \label{MAP}
\end{equation}
where $\log P(\ty|\tx)$ and $\log P(\tx)$ denote the data likelihood and the prior terms, respectively. For additive Gaussian noise, $P(\ty|\tx)$ corresponds to the $\ell_2$-norm data fidelity term, and the prior term $P(\tx)$ characterizes the prior knowledge of $\tx$ in a probability setting. Formally, Eq. (\ref{MAP}) can be rewritten as
\begin{equation}
\tx = \argmin_{\tx} ||\ty-\tA\tx||_2^2 + \lambda J(\tx), \label{Obj_fun}
\end{equation}
where $J(\tx)$ denotes the regularizer associated with the prior term $P(\tx)$. Then, the desirable solution is the one that minimizes both the $\ell_2$-norm data fidelity term and the regularization term weighted by parameter $\lambda$. Clearly, the regularization term plays a critical role in searching for high-quality solutions. Numerous regularizers have been developed, ranging from the well-known total variation (TV) regularizer \cite{Osher:TV05}, the sparsity-base regularizers with off-the-shelf transforms or learned dictionaries \cite{KSVD,ISTA,CSR,Mairal:TIP08}, to the nonlocal self-similarity (NLSS) inspired regularizers \cite{NLM,BM3D,NCSR}. The TV regularizer is good at characterizing the piecewise constant signals but unable to model more complex image edges and textures. The sparsity-based techniques are more effective in representing local image structures with a few elemental structures (called atoms) from an off-the-shelf transformation matrix (e.g., DCT and Wavelets) or a learned dictionary. Indeed, the IR community has witnessed a flurry of sparsity-based IR methods \cite{KSVD,CSR,Mairal:TIP08,Yang:SR08} in the past decade. Motivated by the fact that natural images often contain rich repetitive structures, nonolocal regularization techniques \cite{BM3D,NCSR,LASSC,WNNM} combining the NLSS with the sparse representation and low-rank approximation, have shown significant improvements over their local counterparts. Using those carefully designed prior, significant progresses of IR have been achieved. In addition to these explicitly regularized IR methods, denoising-based IR methods have also been proposed \cite{PPP:13,Brifman:ICIP16,Teodoro:ICIP16,RED:17,Chan:17}. In these methods, the original optimization problem is decoupled into two separated subproblems - one for dealing with the data fidelity term and the other for the regularization term, yielding simpler optimization problems. Specifically, the subproblem related to the regularization is a pure denoising problem, and thus other more complex denoising methods that cannot be expressed as regularization terms can also be adopted, e.g., BM3D \cite{BM3D}, NCSR \cite{NCSR} and GMM \cite{Zoran:ICCV11} methods.

Different from the model-based methods that rely on a carefully designed prior, the learning-based IR methods learn mapping functions to infer the missing high-frequency details or desirable high-quality images from the observed image. In the past decade, many learning-based image super-resolution methods \cite{Freeman:02,Aplus,SRCNN,FSRCNN,VDSR}, \cite{Tai:CVPR17,Han:CVPR18,Tong:ICCV17,Zhang:TIP18,Zhang:TIP17} have been proposed, where mapping functions from the low-resolution (LR) patches to high-resolution (HR) patches are learned. Inspired by the great successes of the deep convolution neural network (DCNN) for image classification \cite{Alexnet,Resnet,Liu:PR17,Liu:Arxiv18}, the DCNN models have also been successfully applied to image IR tasks, e.g., SRCNN \cite{SRCNN}, FSRCNN \cite{FSRCNN}, VDSR \cite{VDSR} and EDSR \cite{EDSR} for image super-resolution, and TNRD \cite{TNRD}, DnCNN \cite{Zhang:TIP17} and MemNet \cite{Tai:ICCV17} for image denoising. In these methods, a DCNN is used to learn the mapping function from the degraded images to the original images. Due to its powerful representation ability, the DCNN based methods have shown better IR performances than conventional optimization-based IR methods in various IR tasks \cite{SRCNN,VDSR,TNRD,Tai:CVPR17,Han:CVPR18,Tong:ICCV17,Zhang:TIP18,Zhang:TIP17}.
Though the DCNN models have shown promising results, the DCNN methods lack flexibilities in adapting to different image recovery tasks, as the data likelihood term has not been explicitly exploited. To address this issue, hybrid IR methods that combine the optimization-based methods and DCNN denoisers have been proposed. In \cite{Zhang:CVPR17,Meinhardt:ICCV17}, a set of DCNN models are pre-trained for image denoising task and are integrated into the optimization-based IR framework for different IR tasks. Compared with other optimization-based methods, the integration of the DCNN models has advantages in exploiting the large training dataset and thus leads to superior IR performance. Similar idea has also been exploited in the autoencoder-based IR method \cite{Bigdeli:arXiv}, where denoising autoencoders are pre-trained as a natural image prior and a regularzer based on the pre-trained autoencoder is proposed. The resulting optimization problem is then iteratively solved by gradient descent. Despite the effectiveness of the methods \cite{Zhang:CVPR17,Bigdeli:arXiv}, they have to iteratively solve optimization problems, and thus their computational complexities are high. Moreover, the CNN and autoencoder models adopted in \cite{Zhang:CVPR17,Bigdeli:arXiv} are pre-trained and cannot be jointly optimized with other algorithm parameters.

In this paper, we propose a denoising prior driven deep network to take advantages of both the optimization- and discriminative learning-based IR methods. First, we propose a denoising-based IR method, whose iterative process can be efficiently carried out. Then, we unfold the iterative process into a feed-forward neural network, whose layers mimic the process flow of the proposed denoising-based IR algorithm. Moreover, an effective DCNN denoiser that can exploit the multi-scale redundancies is proposed and plugged into the deep network. Through end-to-end training, both the DCNN denoisers and other network parameters can be jointly optimized. Experimental results show that the proposed method can achieve very competitive and often state-of-the-art results on several IR tasks, including image denoising, deblurring and super-resolution.

\section{Related Work}

We briefly review the IR methods, i.e., the denoising-based IR methods and the discriminative learning-based IR methods, which are related to the proposed method.

\subsection{Denoising-based IR methods}

Instead of using an explicitly expressed regularizer, denoising-based IR methods \cite{PPP:13} allow the use of a more complex image prior by decoupling the optimization problem of Eq. (\ref{Obj_fun}) into two subproblems, one for the data likelihood term and the other for the prior term. By introducing an auxiliary variable $\tv$, Eq. (\ref{Obj_fun}) can be rewritten as
\begin{equation}
(\tx,\tv) = \argmin_{\tx, \tv} \frac{1}{2}||\ty-\tA\tx||_2^2 + \lambda J(\tv), s.t.~\tx = \tv. \label{den-based}
\end{equation}
In \cite{PPP:13,Chan:17}, the ADMM technique is used to convert the above equally constrained optimization problem into two subproblems
\begin{equation}
\begin{split}
&\tx^{(t+1)} = \argmin_{\tx} \frac{1}{2}||\ty-\tA\tx||_2^2 + \frac{\mu}{2}||\tx-\tv^{(t)}+\tu^{(t)}||_2^2, \\
&\tv^{(t+1)} = \argmin_{\tv} \frac{\mu}{2}||\tx^{(t+1)}-\tv+\tu^{(t)}||_2^2 + \lambda J(\tv),
\end{split}
\end{equation}
where $\tu$ denotes the augmented Lagrange multiplier updated as $\tu^{(t+1)} = \tu^{(t)}+\rho(\tx^{(t+1)}-\tv^{(t+1)})$. The $\tx$-subproblem is a simple quadratic optimization that admits a closed-form solution as
\begin{equation}
\tx^{(t+1)} = (\tA^{\top}\tA+\lambda\tI)^{-1}(\tA^{\top}\ty+\lambda(\tv^{(t)}-\tu^{(t)})).
\end{equation}
The intermediately reconstructed image $\tx^{(t+1)}$ depends on both the observation model and a fixed estimate of $\tv$. The $\tv$-subproblem is also called the proximity operator of $J(\tv)$ computed at point $\tx^{(t+1)}+\tu^{(t)}$, whose solution can be obtained by a denoising algorithm. By alternatively updating $\tx$ and $\tv$ until convergence, the original optimization problem of Eq. (\ref{Obj_fun}) is then solved. The advantage of this framework is that other state-of-the-art denoising algorithms, which cannot be explicitly expressed in $J(\tx)$, can also be used to update $\tv$, leading to better IR performance. For example, the well-known BM3D \cite{BM3D}, Gaussian mixture model \cite{Zoran:ICCV11}, NCSR \cite{NCSR} have been used for various IR applications \cite{PPP:13,Brifman:ICIP16,Teodoro:ICIP16}. In \cite{Zhang:CVPR17}, the sate-of-the-art CNN denoiser has also been plugged as an image prior for general IR. Due to the excellent denoising ability, state-of-the-art IR results for different IR tasks have been obtained. Similar to \cite{Bigdeli:arXiv}, an autoencoder denoiser is plugged into the objective function of Eq. (\ref{Obj_fun}). However, different from the variable splitting method described above, the objective function of \cite{Bigdeli:arXiv} is minimized by gradient descent. Though the denoising-based IR methods are very flexible and effective in exploiting sate-of-the-art image prior, they require a lot of iterations for convergence and the whole components cannot be jointly optimized.

\subsection{Deep network based IR methods}

Inspired by the great success of DCNNs for image classification \cite{Alexnet,Resnet}, object detection \cite{FRCNN:PAMI17,Liu:detection}, semantical segmentation \cite{FCN:CVPR15}, etc., DCNNs have also been applied for low-level image processing tasks \cite{SRCNN,VDSR,TNRD,Zhang:TIP17}. Similar to the coupled sparse coding \cite{Yang:SR08}, DCNNs have been proposed to learn nonlinear mapping from the LR patch space to the HR patch space \cite{SRCNN}. In \cite{Zhang:TIP17}, DCNN with residual learning has been proposed for image restoration. To improve the SR performance, very deep CNN has been developed and achieved sate-of-the-art SR results \cite{VDSR}. To alleviate the difficulty of training very deep networks, deep recursive residual learning has been proposed to train very deep networks for image SR \cite{Tai:CVPR17}. By treating deep super-resolution architecture as a single-state recurrent neural network (RNN), in \cite{Han:CVPR18} a dual-state RNN has been proposed for SR to exploit both low-resolution and high-resolution signals jointly. To reuse the feature maps from preceding layers, densely connected network has also been developed for image SR \cite{Tong:ICCV17}. Different from the existing shortcut connections for identity mappings, adaptive shortcut connections with learnable parameters have also been proposed in \cite{Zhang:TIP18} for image restoration tasks. In addition to the commonly used mean-square loss, a generative adversarial network (GAN) based SR model using perceptual loss functions has also been proposed for photo-realistic super-resolved natural images \cite{Ledig:CVPR17}. To exploit the long-term dependencies in the deep CNN, in \cite{Tai:ICCV17} very deep persistent memory network containing memory blocks has been developed, leading to substantial improvements for typical image restoration tasks. For non-blind image deblurring, multiplayer perceptron network \cite{MLP:CVPR12} has been developed to remove the deconvolution artifacts. In \cite{Xu:NIPS14}, Xu et al. propose to use DCNN for non-blind image deblurring. Though excellent IR performances have been obtained, these DCNN methods generally treat the IR problems as denoising problems, i.e., removing the noise or artifacts of the initially recovered images, and ignore the observation models.

There have been some attempts to leverage the domain knowledge and the observation model for IR. In \cite{Wang:CVPR15}, based on the learned iterative shrinkage/thresholding algorithm (LISTA) \cite{LISTA}, Wang et al. developed a deep network whose layers correspond to the steps of the sparse coding based image SR. In \cite{TNRD}, the classic iterative nonlinear reaction diffusion method is also implemented as a deep network, whose parameters were jointly trained. The DNN inspired from the ADMM-based sparse coding algorithm has also been developed for compressive sensing based MRI reconstruction \cite{ADMM-Net}. In \cite{Xin:NIPS16}, the truncated iterative hard thresholding algorithm for solving $\ell_0$-norm sparse recovery problem was implemented as a DNN. These model-based DNNs have shown significant improvements in terms of both efficiency and effectiveness over original iterative algorithms. However, the strict implementations of the conventional sparse coding based methods result in limited receipt fields of the convolutional filters and thus cannot exploit the spatial correlations of the feature maps effectively, leading to limited IR performance. In \cite{Kim:CVPR17} learned regularizer based on the DCNN has been proposed under the alternative minimization framework, showing very promising results for several IR tasks. However, there were also some hand-crafted components in the proposed framework, e.g., the gradient operators to extract gradient features and the preconditioned conjugate gradient (PCG) method used to reconstruct the image from the regularized gradients. Instead of learning regularizer in the gradient domain, DCNN-based image denoisers in pixel domain have also been learned as proximal operators of regularization used in convex energy minimization algorithms for image restoration \cite{Zhang:CVPR17,Meinhardt:ICCV17}.

\section{Proposed Denoising-based Image Restoration Algorithm}

In this section, we develop an efficient iterative algorithm for solving the denoising-based IR methods, based on which a feed-forward DNN will be proposed in the next section. Considering the denoising-based IR problem of Eq. (\ref{den-based}), we adopt the half-quadratic splitting method, by which the equally constrained optimization problem can be converted into a non-constrained optimization problem, as
\begin{equation}
(\tx,\tv)  = \argmin_{\tx, \tv} \frac{1}{2}||\ty-\tA\tx||_2^2 + \eta||\tx-\tv||_2^2 + \lambda J(\tv). \\
\end{equation}
The above optimization problem can be solved by alternatively solving two sub-problems,
\begin{equation}
\begin{split}
&\tx^{(t+1)} = \argmin_{\tx} ||\ty-\tA\tx||_2^2 + \eta||\tx-\tv^{(t)}||_2^2, \\
&\tv^{(t+1)} = \argmin_{\tv} \eta||\tx^{(t+1)}-\tv||_2^2 + \lambda J(\tv).
\end{split}
\end{equation}
The $\tx$-subproblem is a quadratic optimization problem that can be solved in closed-form, as $\tx^{(t+1)} = \tW^{-1}\tb$, where $\tW$ is a matrix related to the degradation matrix $\tA$. Generally, $\tW$ is very large, so it is impossible to compute its inverse matrix. Instead, the iterative classic conjugate gradient (CG) algorithm can be used to compute $\tx^{(t+1)}$, which requires many iterations for computing $\tx^{(t+1)}$. In this paper, instead of solving for an exact solution of the $\tx$-subproblem, we propose to compute $\tx^{(t+1)}$ with a single step of gradient descent for an inexact solution, as
\begin{equation}
\begin{split}
\tx^{(t+1)} &= \tx^{t} - \delta[ \tA^{\top}(\tA\tx^{(t)}-\ty) + \eta(\tx^{(t)}-\tv^{(t)}) ]  \\
&= \bar{\tA}\tx^{(t)} + \delta\tA^{\top}\ty + \delta\eta\tv^{(t)},
\end{split}
\end{equation}
where $\bar{\tA}=[(1-\delta\eta)\tI-\delta\tA^{\top}\tA]$ and $\delta$ is the parameter controlling the step size. By pre-computing $\bar{\tA}$, the update of $\tx^{(t)}$ can be computed very efficiently. As will be shown later, we do not need to solve the $\tx$-subproblem exactly. Updating $\tx^{(t+1)}$ once is sufficient for $\tx^{(t)}$ to converge to a local optimal solution. The $\tv$-subproblem is a proximity operator of $J(\tv)$ computed at point $\tx^{(t+1)}$, whose solution can be obtained by a denoiser, i.e., $\tv^{(t+1)}=f(\tx^{(t+1)})$, where $f(\cdot)$ denotes a denoiser. Various denoising algorithms can be used, including those cannot be explicitly expressed by the MAP estimator with $J(\tx)$. In this paper, inspired by the success of DCNN for image denoising, we choose a DCNN-based denoiser to exploit the large training dataset. However, different from existing DCNN models for IR, we consider the network that can exploit the multi-scale redundancies of natural images, as will be described in the next section. In summary, the proposed iterative algorithm for solving the denoising-based IR problem is summarized in \textbf{Algorithm 1}, where we initialize $\tx^{(0)}$ as $\tx^{(0)}=\tA^{\top}\ty$. Thus, for image denoising, $\tx^{(0)}=\ty$. For image SR, $\tx^{(0)}=\tA^{\top}\ty=\tH^{\top}\tD^{\top}\ty$, which can be obtained by first upsampling $\ty$ with zero interpolation and then filtering the upsampled image with transposed blur matrix $\tH^{\top}$. For image SR with bicubic downsampling, $\tx^{(0)}=\tA^{\top}\ty$ is implemented as image upsampling with bicubic interpolator. For image deblurring, $\tx^{(0)}=\tA^{\top}\ty=\tH^{\top}\ty$, which can be implemented as transposed convolution of $\ty$ with $\tH^{\top}$. We now discuss the convergence property of \textbf{Algorithm 1}.

\begin{algorithm}[tbh]
\caption{Denoising-based IR Algorithm}
$\bullet$ \textbf{Initialization}:

    \hspace{0.5cm} (1) Set observation matrix $\tA$, $\bar{\tA}$, $\delta>0$, $\eta>0$,  $t=0$;

   \hspace{0.5cm} (2) Initialize $\tx$ as $\tx^{(0)} = \tA^{\top}\ty$;

$\bullet$ \textbf{While} not converge \textbf{do}

    \hspace{0.5cm} (1) Compute $\tv^{(t+1)} = f(\tx^{(t)})$;

    \hspace{0.5cm} (2) Compute $\tx^{(t+1)} = \bar{\tA}\tx^{(t)} + \delta\tA^{\top}\ty + \delta\eta\tv^{(t+1)}$;

    \hspace{0.5cm} (3) $t = t + 1$.

\textbf{End while}

Output: $\tx^{(t)}$
\end{algorithm}

\begin{thm}[] Consider the energy function
$$\xi(\tx,\tv) := \frac{1}{2}\|\ty-\tA\tx\|_2^2 + \frac{\eta}{2}\|\tx-\tv\|_2^2 + \lambda J(\tv).$$
Assume that $\xi$ is lower bounded and coercive\footnote{$\xi(\tx,\tv)\to\infty$ whenever $\|(\tx,\tv)\|\to\infty$.}.
For Algorithm 1, $(\tx^{(t)}, \tv^{(t)})$ has a subsequence that converges to a stationary point of the the energy function provided that the denoiser $f(\cdot)$ satisfies the sufficient descent condition:
\begin{align}\label{fcond}
&\frac{\eta}{2}||\tx-\tv||_2^2 + \lambda J(\tv)-\frac{\eta}{2}||\tx-f(\tx)||_2^2 - \lambda J(f(\tx))\nonumber\\
&\ge c_2\|\tilde\nabla_{\tv}\xi(\tx,\tv)\|_2^2,
\end{align}
where $c_2>0$ and $\tilde\nabla_{\tv}\xi(\tx,\cdot)$ is a continuous limiting subgradient of $\xi$.
\end{thm}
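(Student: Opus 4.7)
My plan is to establish joint sufficient descent of $\xi$ across the two subproblem updates, telescope using the lower bound to get summable squared gradient norms, and then combine coercivity with the outer semicontinuity of the $\tv$-subdifferential to extract a stationary limit point.

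First, observe that the $\tx$-update is exactly one gradient-descent step on the smooth quadratic $\tx\mapsto\xi(\tx,\tv^{(t)})$, since by construction $\tx^{(t+1)}-\tx^{(t)} = -\delta\,\nabla_\tx\xi(\tx^{(t)},\tv^{(t)})$. This quadratic has Lipschitz gradient with constant $L=\|\tA^\top\tA\|+\eta$ independent of $\tv$. Under the (implicit) step-size condition $\delta\le 1/L$, the standard descent lemma yields
\[
\xi(\tx^{(t)},\tv^{(t)}) - \xi(\tx^{(t+1)},\tv^{(t)}) \;\ge\; \tfrac{\delta}{2}\,\|\nabla_\tx\xi(\tx^{(t)},\tv^{(t)})\|_2^2.
\]
The $\tv$-update is handled directly by hypothesis \eqref{fcond}: because the data term does not involve $\tv$, applying \eqref{fcond} at $(\tx,\tv)\leftarrow(\tx^{(t+1)},\tv^{(t)})$ gives
\[
\xi(\tx^{(t+1)},\tv^{(t)}) - \xi(\tx^{(t+1)},\tv^{(t+1)}) \;\ge\; c_2\,\|\tilde\nabla_\tv\xi(\tx^{(t+1)},\tv^{(t)})\|_2^2.
\]
Summing produces the joint sufficient-decrease bound $\xi^{(t)}-\xi^{(t+1)} \ge (\delta/2)\|\nabla_\tx\xi(\tx^{(t)},\tv^{(t)})\|_2^2 + c_2\|\tilde\nabla_\tv\xi(\tx^{(t+1)},\tv^{(t)})\|_2^2$.

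Next I would telescope over $t=0,\ldots,N$ and send $N\to\infty$. Since $\xi$ is lower bounded, both $\|\nabla_\tx\xi(\tx^{(t)},\tv^{(t)})\|_2^2$ and $\|\tilde\nabla_\tv\xi(\tx^{(t+1)},\tv^{(t)})\|_2^2$ are summable, hence vanish along a common subsequence $\{t_k\}$. Monotone descent together with the lower bound makes $\{\xi^{(t)}\}$ bounded, and coercivity of $\xi$ forces $\{(\tx^{(t)},\tv^{(t)})\}$ itself to be bounded. Bolzano--Weierstrass then yields a further subsequence (still denoted $\{t_k\}$) with $(\tx^{(t_k)},\tv^{(t_k)})\to(\tx^\star,\tv^\star)$. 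Joint continuity of the affine map $\nabla_\tx\xi$ gives $\nabla_\tx\xi(\tx^\star,\tv^\star)=0$ at once. For the $\tv$-component, the identity $\|\tx^{(t_k+1)}-\tx^{(t_k)}\|_2 = \delta\|\nabla_\tx\xi(\tx^{(t_k)},\tv^{(t_k)})\|_2 \to 0$ ensures $(\tx^{(t_k+1)},\tv^{(t_k)})\to(\tx^\star,\tv^\star)$, so invoking continuity of $\tilde\nabla_\tv\xi$ passes $\tilde\nabla_\tv\xi(\tx^{(t_k+1)},\tv^{(t_k)})\to 0$ into $\tilde\nabla_\tv\xi(\tx^\star,\tv^\star)=0$, i.e.\ the stationarity condition $0\in\partial_\tv\xi(\tx^\star,\tv^\star)$.

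The delicate step is this last one: since $J$ is allowed to be nonsmooth, one cannot simply appeal to continuity of a classical gradient. The hypothesis that $\tilde\nabla_\tv\xi$ is a \emph{continuous} limiting subgradient is precisely what supplies the outer-semicontinuity of the subdifferential graph needed to close the argument, so the genuine obstacle is formalizing and applying that notion rigorously, rather than producing the descent estimates themselves. A secondary point worth flagging explicitly is the step-size restriction $\delta\le 1/L$, which is left implicit in the algorithm statement but indispensable for the descent lemma to yield a quantitative decrease proportional to $\|\nabla_\tx\xi\|_2^2$; a natural way to finesse it is to pre-compute $L$ from $\tA$ and fold the bound into the initialization step.
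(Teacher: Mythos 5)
Your proposal is correct and follows essentially the same route as the paper's proof: one gradient step on the smooth $\tx$-block with the descent lemma (the paper states it as $\xi^{(t)}-\xi^{(t+\frac{1}{2})}\ge c_1\|\tx^{(t)}-\tx^{(t+1)}\|_2^2$ with $c_1=\frac{1}{\delta}-\frac{\|\tA^{\top}\tA\|+\eta}{2}$ for $\delta<\frac{2}{\|\tA^{\top}\tA\|+\eta}$, which is equivalent to your gradient-norm form), the hypothesis \eqref{fcond} for the $\tv$-block, telescoping with the lower bound to get summability, coercivity plus monotonicity for boundedness and a convergent subsequence, and continuity of $\nabla_{\tx}\xi$ and of the limiting subgradient $\tilde\nabla_{\tv}\xi$ (via $\|\tx^{(t+1)}-\tx^{(t)}\|_2\to 0$) to conclude stationarity. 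Your explicit flagging of the implicit step-size restriction matches the condition the paper quietly imposes inside its proof.
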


\textit{Proof} See the Appendix.

Let us discuss the condition \eqref{fcond}. We list some combinations of the function $J$ and mapping $f$ 
that satisfy  \eqref{fcond}:
\begin{enumerate}
\item $J$ is $L$-Lipschitz differentiable, and $f:(\tx,\tv)\mapsto \tv-\alpha \nabla_{\tv} \xi(\tx,\tv)$ is a gradient descent map, where $\alpha \in (0,\frac{2}{\eta+L})$ if $\xi(\tx,\tv)$ is convex in $\tv$ or  $\alpha \in (0,\frac{1}{\eta+L})$ otherwise. Then, \eqref{fcond} follows from standard gradient analysis.
\item $J$ is proper and lower semi-continuous, the function $\xi'(\tu;\tx,\tv):=\frac{\mu}{2}\|\tx-\tu\|_2^2 + \lambda J(\tu)+\frac{\beta}{2}\|\tv-\tu\|_2^2$ is at least $\beta$-strongly   convex in $\tu$, and $f: (\tx,\tv) \mapsto \tv^+:=\argmin_{\tu}\xi'(\tu;\tx,\tv)$. This $f$ is known as the proximal mapping of $\frac{\mu}{2}\|\tx-\cdot\|_2+J(\cdot)$. The properties of $J$ ensures $\tv^+$ to be well defined. Then, by  convexity and optimality condition of the ``$\argmin$'' subproblem,
    \begin{align}&\frac{\mu}{2}\|\tx-\tv\|_2^2 + \lambda J(\tv)-\frac{\mu}{2}\|\tx-\tv^+\|_2^2 - \lambda J(\tv^+)\nonumber\\
    &\ge \beta\|\tv-\tv^+\|_2^2 = \frac{1}{\beta}\|\mu(\tx-\tv^+)+\lambda \tilde\nabla J(\tv^+)\|_2^2\nonumber\\
    &=\frac{1}{\beta}\|\tilde\nabla_{\tv}\xi(\tx,\tv^+)\|_2^2.\label{xvp}
    \end{align}
    This is different from \eqref{fcond} since the right-hand side uses $\tv^+$ rather than $\tv$. However, applying the right-hand side term $\|\tv-\tv^+\|_2^2$ in the proof yields $\lim_{t}\|\tv^{(t)}-\tv^{(t+1)} \|_2=0$ and thus \eqref{fcond} is satisfied asymptotically and the proof results still apply.

\item Let $\mathcal{M}$ denote a manifold of (noiseless) images and $J(\tv) := \mathrm{dist}(\tv,\mathcal{M})^2$ be a function that measures a certain kind of squared distance between $\tv$ and $\mathcal{M}$. In particular, consider the squared Euclidean distance $J(\tv)=\frac{1}{2}\|\tv - \Pi_{\mathcal{M}}(\tv)\|_2^2$, where $\Pi_{\mathcal{M}}(\tv)$ denotes  orthogonal projection of $\tv$ to $\mathcal{M}$. Then, for $f(\tx) := \argmin_{\tu}\{\frac{\mu}{2}\|\tx-\tu\|_2^2 + \frac{\lambda}{2}\|\tu - \Pi_{\mathcal{M}}(\tu)\|_2^2+\frac{\beta}{2}\|\tv-\tu\|_2^2\}$, we have $f(\tx)=\frac{1}{\lambda+\mu+\beta}(\mu\tx +\beta\tv+\lambda\Pi_{\mathcal{M}}(\mu\tx +\beta\tv)).$ Similar to the last point, we have \eqref{xvp} and thus \eqref{fcond} asymptotically.
\item For the same $\mathcal{M}$ in the last part, define $J(\tx)=\delta_{\mathcal{M}}(\tx)$, which returns 0 if $\tx\in\mathcal{M}$ and $\infty$ if $\tx\not\in\mathcal{M}$. If the manifold $\mathcal{M}$ is bounded and differentiable, then $J(\tx)$ is known as \emph{restricted prox-regular}. For $f(\tx) := \argmin_{\tu}\{\frac{\mu}{2}\|\tx-\tu\|_2^2 + \delta_{\mathcal{M}}(\tx)+\frac{\beta}{2}\|\tv-\tu\|_2^2\}$, It is discussed in \cite{Yin:18} that \eqref{xvp} holds and thus \eqref{fcond} holds in the asymptotic sense.
\end{enumerate}
In parts 2--3 above, we can remove the proximity term $\frac{\beta}{2}\|\tv-\tu\|_2^2$, which is used in defining the mapping $f$, and still ensure the same result, i.e., subsequence convergence to a stationary point. However, the  proof must be adapted to each $J(\tv)$ separately. We leave this to our future work.

In part 2, $J(\tx)$ is proper if the manifold is nonempty, and $J(\tx)$ is lower semi-continuous if the epigraph of the manifold is closed. These conditions are easy to check and often, though not always, satisfied. That said, the manifold needs to be first-order smooth and bounded, or having globally bound curvatures, in order for $\xi$ to be strongly convex.

It has been shown in \cite{KL:07} that if $\xi$ has the Kurdyka-\L ojasiewicz (KL) property, the subsequence convergence can be upgraded to the convergence of full sequence, which has been a standard argument in recent convergence analysis. As shown in \cite{Yin:13}, functions satisfying the KL property include, but not limited to, real analytic functions, semi-algebraic functions, and locally strongly convex functions. Therefore, $(\tx^{(t)}, \tv^{(t)})$ converges to a stationary point. It is possible that the stationary point $(\tx^{*},\tv^{*})$ is a saddle point rather than a local minimizer. However, it is known that first-order methods almost always avoid saddle points assuming the initial solution is randomly selected \cite{Lee:arXiv}. Therefore, converging to a saddle point is extremely unlikely.

It has been shown in \cite{Alain:14} that the denoiser autoencoder can be regarded as an approximately orthogonal projection of the noisy input $\ty$ to the manifold of noiseless images. Therefore, as shown in the above parts $2$, \textbf{Algorithm 1} with the mapping function $f(\cdot)$ defined by the DCNN denoiser in a loose sense converges to a local minimizer, based on the above analysis.

\section{Denoising Prior Driven Deep Neural Network}

\begin{figure*}[tbh]
	\centering
\subfigure[]{
    \includegraphics[width=0.90\linewidth]{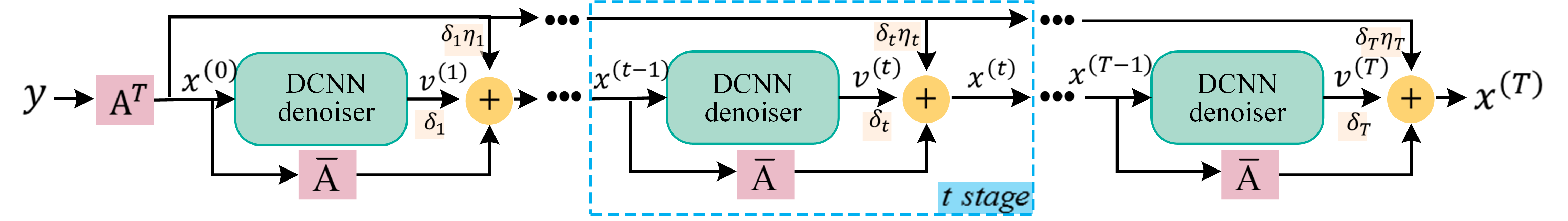}
    }
\subfigure[]{
    \includegraphics[width=0.70\linewidth]{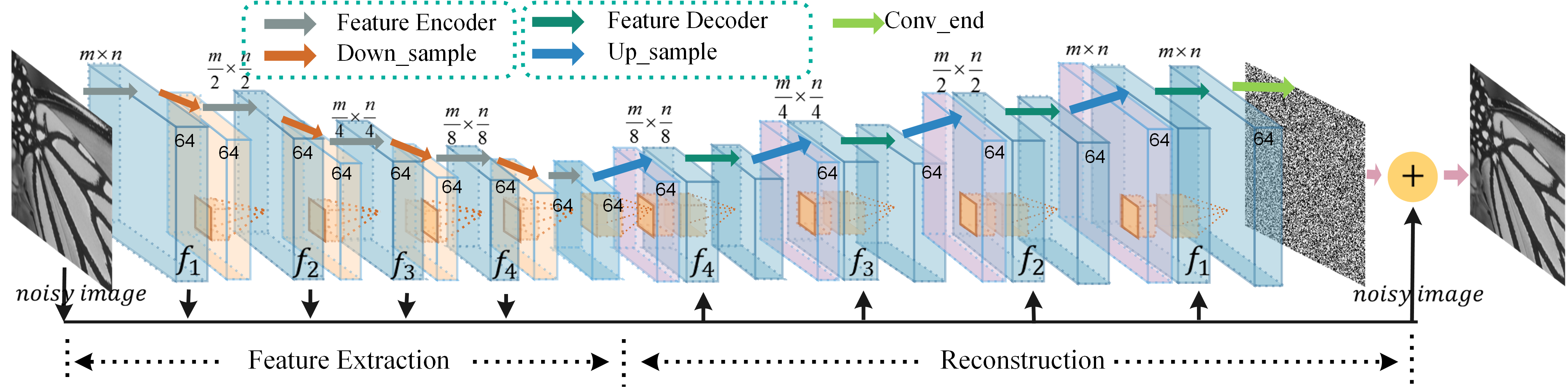}
    }
\subfigure[]{
    \includegraphics[width=0.16\linewidth]{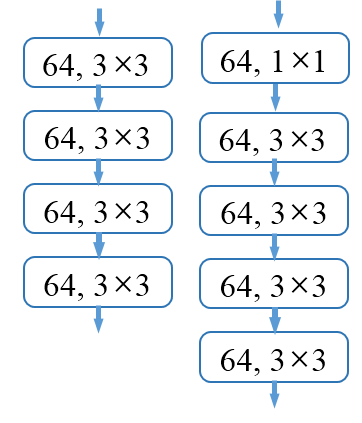}

   }\\
  \caption{Architectures of the proposed deep network for image restoration. (a) The overall architecture of the proposed deep neural network; (b) the architecture of the plugged DCNN-based denoiser; (c) the architecture of the feature extraction(left) and the reconstruction(right)}.
  \label{fig:network}

\end{figure*}

In general, \textbf{Algorithm 1} requires many iterations to converge and is computationally expensive. Moreover, the parameters and the denoiser cannot be jointly optimized in an end-to-end training manner. To address these issue, here we propose to unfold the \textbf{Algorithm 1} into a deep network of the architecture shown in Fig. \ref{fig:network} (a). The network exactly executes $T$ iterations of \textbf{Algorithm 1}. The input degraded image $\ty\in \mathbb{R}^{n_y}$ first goes through a linear layer parameterized by the degradation matrix $\tA\in \mathbb{R}^{n_y\times m_x}$ for an initial estimate $\tx^{(0)}$. $\tx^{(0)}$ is then fed into the denoiseing module and the linear layer parameterized by matrix $\bar{\tA}\in \mathbb{R}^{m_x\times m_x}$. The denoised signal $\tv^{(1)}$ weighted by $\delta_{1,1}$ is then added with the output of the linear layer $\bar{\tA}$ and $\tA^{\top}\ty$ weighted by $\delta_{1,2}$ via a shortcut connection to obtain the updated $\tx^{(1)}$. The structure of the denoising module is shown in Fig. \ref{fig:network}(b). Such a process is repeated $T$ times. In our implementation, $T=6$ was always used. Instead of using fixed weights, all the weights $\delta_{t,1}$, $\delta_{t,2}$, $t=1,2,\cdots, T$ involved in the $T$ recurrent stages can be discriminatively learned through end-to-end training. Regarding the denoising module, as we are using a DCNN-based denoiser that contains a large number of parameters, we enforce all the denoising modules to share the same parameters to avoid over-fitting.

The linear layers $\tA^{\top}$ and $\bar{\tA}$ are also trainable for a typical degradation matrix $\tA$. For image denoising, $\tA=\tA^{\top}=\tI$, and $\bar{\tA}$ also reduces to a weighted identity matrix $\bar{\tA}=\lambda\tI$, where $\lambda=1-\delta(1+\eta)$. For image deblurring, the layer $\tA^{\top}$ can be simply implemented with a convolutional layer. The layer $\bar{\tA}=a\tI-\delta\tA^{\top}\tA$ can also be computed efficiently by convolutional operations. The weight $a$ and filters correspond to $\tA^{\top}$ and $\tA$ can also be discriminatively learned. For image super-resolution, two types of degradation operators are considered: the Gaussian downsampling and the bicubic downsampling. For Gaussian downsampling, $\tA=\tD\tH$, where $\tH$ and $\tD$ denote the Gaussian blur matrix and the downsampling matrix, respectively. In this case, the layer $\tA^{\top}=\tH^{\top}\tD^{\top}$ corresponds to first upsample the input LR image by zero-padding and then convolute the upsampled image with a filter. Layer $\bar{\tA}$ can also be efficiently computed with convolution, downsampling and upsampling operations. All convolutional filters involved in these operations can be discriminatively learned. For bicubic downsampling, we simply use the bicubic interpolator function with scaling factor $s$ and $1/s$ ($s=2,3,4$) to implement the matrix-vector multiplications $\tA^{\top}\ty$ and $\tA\tx$, respectively.

\subsection{The DCNN denoiser}

Inspired by the recent advances on semantical segmentation \cite{FCN:CVPR15} and object segmentation \cite{Sharpmask}, the architecture of the denoising network is illustrated in Fig. \ref{fig:network}(b). Note that other more powerful denoising network can also be used in the proposed IR framework. Similar to the U-net \cite{UNet} and the sharpMask net \cite{Sharpmask}, the denoising network contains two parts: the feature extraction and image reconstruction parts. In the feature extraction part, there are a series of convolutional layers followed by downsampling layers to reduce the spatial resolution of the feature maps. The downsampling layer helps increasing the receipt field of the neurons. The convolutional layers are grouped into $L$ feature encoding blocks ($L=6$ in our implementation), as shown by the gray arrows in Fig. \ref{fig:network}(b). As shown in Fig. \ref{fig:network}(c), each feature encoding block contains four convolutional layers with ReLU nonlinearity and $3\times 3$ kernels, each of which generates 64-channel feature maps. The first four encoding blocks are followed by a downsampling layer to reduce the spatial resolution of the feature maps with scaling factor 0.5. In the downsampling layers, the feature maps are sub-sampled by scaling factor 2 along both axes.

The image reconstruction part also contains a series of convolutional layers, which are grouped into four feature decoding blocks (as shown by the green arrows in Fig. \ref{fig:network}(b)) followed by upsampling layers to increase the spatial resolution of the feature maps. As the finally extracted feature maps lose a lot of spatial information, directly reconstructing images from the extracted features cannot recover fine image details. To compensate the lost spatial information, the feature maps of the same spatial resolution generated in the encoding stage are fused with the upsampled feature maps generated in the decoding stage, for obtaining newly upsampled feature maps. As shown in Fig. \ref{fig:network}(c), each decoding block consists of five convolutional layers. The first layer reduces the number of feature maps from $128$ to $64$ with $1\times 1$ kernels and ReLU function. The following four layers generates $64$-channel feature maps with $3\times 3$ kernels with ReLU nonlinearity. The generated feature maps of the last layer are then upsampled with scaling factor $2$ by a deconvolution layer. The upsampled feature maps are then fused with the feature maps of the same spatial resolution from the encoding part. Specifically, the fusion is conducted by concatenating the feature maps. The output image is reconstructed from the $64$-channel feature maps with a filter of size $3\times 3$. Instead of reconstructing the original image directly, we enforce the denoising network to predict the residual, which has been verified to be more robust \cite{Zhang:TIP17}. To this end, a skip connection from the input to the reconstructed image was added.

\subsection{Overall network training}

Note that the DCNN denoisers do not have to be pre-trained. Instead, the overall deep network shown in Fig. \ref{fig:network} (a) is trained by end-to-end training. To reduce the number of parameters and thus avoid over-fitting, we enforce each DCNN denoiser to share the same parameters. Mean square error (MSE) based loss function is adopt to train the proposed deep network, which can be expressed as
\begin{equation}
\tTheta = \argmin_{\tTheta} \sum_{i=1}^N ||\mathcal{F}(\ty_i; \tTheta) - \tx_i||_2^2,
\end{equation}
where $\ty_i$ and $\tx_i$ denote the $i$-th pair of degraded and original image patches, respectively, and $\mathcal{F}(\ty_i; \tTheta)$ denotes the reconstructed image patch by the network with parameter set $\tTheta$. It is also possible to train the network with other the perceptual based loss functions, which may lead to better visual quality. We remain this as future work. The ADAM optimizer \cite{ADAM} is used to train the network with setting $\beta_1=0.9$, $\beta_2=0.999$ and $\epsilon=10^{-8}$. The convolutional kernels were initialized by Xavier initializers developed in \cite{He:ICCV15}. The linear layers related to the degradation matrix $\tA$ were initialized by the degradation model $\tA$. Other parameters, i.e., $\delta$ and $\eta$ were empirically initialized as $0.1$ and $0.9$, respectively. The proposed networks were trained with a minibatch size of $16$. The learning rate $\epsilon$ was initialized as $0.0005$ and halved at every $43000$ minibatch updates. The proposed network is implemented under the Tensorflow framework and trained using $4$ Nvidia Titan Xp GPUs, taking about one day to converge. Note that since all the layers of the proposed network are convolutional, the input degraded image can be of arbitrary sizes.

\section{Experimental Results}

In this section, we perform several IR tasks to verify the performance of the proposed network, including image denoising, deblurring, and super-resolution. We trained each model for different IR tasks. We empirically found that implementing $T=6$ iterations of \textbf{Algorithm 1} in the network generally lead to satisfied IR results for image denoising, deblurring and super-resolution tasks. Thus, we fixed $T=6$ for all IR tasks. To train the networks, we constructed a large training image set, consisting of $1000$ images of size $256\times 256$ used in \cite{Dong:TIP11}.

\subsection{Ablation study}

To show the effects of the initialization of the degradation matrix $\tA$, we implemented the proposed network using two types of initializations, i.e., initializing the linear layers related to $\tA$ using the degradation matrix $\tA$ (denoted as DPDNN-A) and random initialization (denoted as DPDNN-Random). Table \ref{table:A_SR_deblur} shows this comparison study. For image super-resolution, as we implement the degradation matrix $\tA$ using the bicubic interpolator function for SR with bicubic downsampling, we only show the ablation study for image SR with Gaussian downsampling. From Table \ref{table:A_SR_deblur}, one can see that the two initializations lead to similar results for both image deblurring and SR tasks, indicating that the network can learn the linear layers related to degradation matrix from scratches.

\begin{table}[H]
	\begin{center}	
		\caption{Ablation study on the effects of the initialization of the layers related to degradation matrix. Average PSNR results of image deblurring and super-resolution by the proposed networks.}
		\setlength{\tabcolsep}{4.5pt}
		\label{table:A_SR_deblur}
		\begin{tabular}{|p{2.02cm}<{\centering}|p{0.5cm}<{\centering}|p{0.5cm}<{\centering}|p{0.5cm}<{\centering}|p{0.5cm}<{\centering}|p{0.5cm}<{\centering}|p{0.5cm}<{\centering}|p{0.5cm}<{\centering}|p{0.5cm}<{\centering}|}
			\hline
			Task         & \multicolumn{5}{c|}{Image debluring}                                     & \multicolumn{2}{c|}{SISR}                  \\ \hline
			Data set     & \multicolumn{5}{c|}{Set10}                                               & Set5                       & Set14                     \\ \hline
			kernel       & \multicolumn{2}{c|}{Kernel 1} & \multicolumn{2}{c|}{Kernel 2} & Gaussian & \multicolumn{2}{c|}{\multirow{2}{*}{ \begin{tabular}[c]{@{}c@{}}Scaling\\ factor 3\end{tabular}}} \\ \cline{1-6}
			Noise level  & 2.55          & 7.65          & 2.55          & 7.65          & 2.0      & \multicolumn{2}{c|}{}                                  \\ \hline
			DPDNN-Random & 33.19         & 29.01         & 32.64         & 28.54         & 30.79    & 34.22                      & 29.88                     \\ \hline
			DPDNN-A      & 33.24         & 29.09         & 32.66         & 28.58         & 30.74    & 34.20                      & 29.91                     \\ \hline
		\end{tabular}
	\end{center}
\end{table}

We have also conducted ablation study on the effects of the initialization of the denoiser, i.e., implementing the proposed network with pre-trained denoiser (denoted as DPDNN-Pretrain) or randomly initialized denoiser (denoted as DPDNN-Random). Total $450,000$ patches of size $40\times 40$ were extracted for training. The noise levels in the range of $[0, 50]$ were used to simulate the noisy image patches. Tables \ref{table:Pre_ran_denoise}$\sim$\ref{table:Pre_ran_deblur} show the average PSNR results for image denoising, SR and deblurring tasks by the proposed method, respectively. From Tables \ref{table:Pre_ran_denoise}$\sim$\ref{table:Pre_ran_deblur}, we can see that the two initializations of the denoiser also lead to similar results. The above two ablation studies show that the proposed network is insensitive to the initialization of the parameters. The reason is that the number of network parameters is controllable, as we enforce each denoiser to share the same parameters, and thus the network can be effectively learned from scratches.

\begin{table}[b]
	\begin{center}
		
		\caption{Ablation study on the effects of the initializations of the denoiser for image denoising. Average PSNR results on Set12 and BSD68 datasets.}
		\setlength{\tabcolsep}{4.5pt}
		\label{table:Pre_ran_denoise}
		\begin{tabular}{|c|c|c|c|c|}
			\hline
			Dataset & \multicolumn{2}{c|}{Set12}    & \multicolumn{2}{c|}{BSD68}    \\ \hline
			$\delta$   & \begin{tabular}[c]{@{}c@{}}DPDNN-\\ Random\end{tabular}  & \begin{tabular}[c]{@{}c@{}}DPDNN-\\ Pretrain\end{tabular} & \begin{tabular}[c]{@{}c@{}}DPDNN-\\ Random\end{tabular} &  \begin{tabular}[c]{@{}c@{}}DPDNN-\\ Pretrain\end{tabular} \\ \hline
			15      & 32.91        & 32.86          & 32.29        & 32.27          \\ \hline
			25      & 30.54        & 30.50          & 29.88        & 29.86          \\ \hline
			50      & 27.50        & 27.53          & 27.02        & 27.03          \\ \hline
		\end{tabular}
	\end{center}
\end{table}

\begin{table}[b]
	\begin{center}
		
		\caption{Ablation study on the effects of the initializations of the denoiser for image super-resolution. Average PSNR results on Set14 and BSD100 datasets.}
		\setlength{\tabcolsep}{3.5pt}
		\label{table:Pre_ran_SR}
		\begin{tabular}{|p{1.0cm}<{\centering}|p{0.15cm}<{\centering}|p{0.85cm}<{\centering}|p{0.9cm}<{\centering}|p{0.9cm}<{\centering}|p{0.9cm}<{\centering}|p{0.9cm}<{\centering}|p{0.85cm}<{\centering}|}
			\hline
			\multicolumn{2}{|c|}{Dataset}   & \multicolumn{2}{c|}{Set14}    & \multicolumn{2}{c|}{BSD100}   & \multicolumn{2}{c|}{Urban100} \\ \hline
			\multicolumn{2}{|c|}{Methods}   &\begin{tabular}[c]{@{}c@{}}DPDNN\\ Random\end{tabular}  & \begin{tabular}[c]{@{}c@{}}DPDNN\\ Pretrain\end{tabular} & \begin{tabular}[c]{@{}c@{}}DPDNN\\ Random\end{tabular}  & \begin{tabular}[c]{@{}c@{}}DPDNN\\ Pretrain\end{tabular} & \begin{tabular}[c]{@{}c@{}}DPDNN\\ Random\end{tabular}  & \begin{tabular}[c]{@{}c@{}}DPDNN\\ Pretrain\end{tabular} \\ \hline
			\multirow{3}{*}{Bicubic} & 2 & 33.30        & 33.04          & 32.09        & 32.04          & 31.50        &   31.49             \\ \cline{2-8}
			& 3 & 30.02        & 30.01          & 29.00        & 28.91          & 27.61        &   27.59             \\ \cline{2-8}
			& 4 & 28.28        & 28.29          & 27.44        & 27.39          & 25.53        &   25.55             \\ \hline
			{Gaussian}& 3 & 29.63        &   29.61   &     28.89   &   28.85         &   26.12      &   26.13       \\ \hline
		\end{tabular}
	\end{center}
\end{table}

\begin{table}[t]
	\begin{center}
		\caption{Ablation study on the effects of the initializations of the denoiser for image deblurring. Average PSNR results on Set10 datasets.}
		\setlength{\tabcolsep}{3.5pt}
		\label{table:Pre_ran_deblur}
	\begin{tabular}{|c|c|c|c|c|c|}
		\hline
		kernel         & \multicolumn{2}{c|}{Kernel1} & \multicolumn{2}{c|}{Kernel2} & Gaussian \\ \hline
		$\delta$          & 2.55         & 7.65         & 2.55          & 7.65         & 2        \\ \hline
		DPDNN-Random   & 33.19        & 29.01        & 32.64         & 28.54        & 30.79    \\ \hline
		DPDNN-Pretrain & 33.17        & 29.09        & 32.68         & 28.59        & 30.89    \\ \hline
	\end{tabular}
	\end{center}
\end{table}

To show the effects of the incorporation of the degradation model, we also trained the denoising network for image deblurring and SR. The structure of the denoising network is shown in Fig.\ref{fig:network}(b). We compared the denoising network (denoted as Den-network) to the proposed DPDNN for image deblurring and SR. The comparison results are shown in Tables \ref{table:deblur_DPDNN}$\sim$\ref{table:SR_DPDNN}. From Tables \ref{table:deblur_DPDNN}$\sim$\ref{table:SR_DPDNN}, we can see that the proposed DPDNN method performs much better than the denoising network. For image deblurring and SR, the average PSNR gains over the denoising network can be up to 0.42 dB and 0.63 dB, respectively, demonstrating the advantages of incorporating the degradation model into the network.

	\begin{table}[H]
	\begin{center}
		
		\caption{Average PSNR results of deblurred images on Set10 dataset by the denoising network and the proposed DPDNN.}
		\setlength{\tabcolsep}{4.5pt}
		\label{table:deblur_DPDNN}
		\begin{tabular}{|c|c|c|c|c|c|}
			\hline
			kernel     & \multicolumn{2}{c|}{Kernel1} & \multicolumn{2}{c|}{Kernel2} & Gaussian \\ \hline
			$\delta$       & 2.55          & 7.65         & 2.55          & 7.65         & 2        \\ \hline
			Den-network & 33.04         & 28.85        & 32.22         & 28.20        & 30.57    \\ \hline
			DPDNN       & 33.19         & 29.01        & 32.64         & 28.54        & 30.79    \\ \hline
		\end{tabular}
	\end{center}
\end{table}

\begin{table}[H]
	\begin{center}
		\caption{Average PSNR results of reconstructed HR images by the denoising network and the proposed DPDNN.}
		\setlength{\tabcolsep}{4.5pt}
		\label{table:SR_DPDNN}
		\begin{tabular}{|p{1.0cm}<{\centering}|p{0.12cm}<{\centering}|p{0.77cm}<{\centering}|p{0.78cm}<{\centering}|p{0.77cm}<{\centering}|p{0.78cm}<{\centering}|p{0.77cm}<{\centering}|p{0.78cm}<{\centering}|}
			\hline
			\multicolumn{2}{|c|}{Dataset}  & \multicolumn{2}{c|}{Set14}                                     & \multicolumn{2}{c|}{BSD100}                                    & \multicolumn{2}{c|}{Urban100}                                  \\ \hline
			\multicolumn{2}{|c|}{Method}    & \begin{tabular}[c]{@{}c@{}}Den-\\ network\end{tabular} & DPDNN & \begin{tabular}[c]{@{}c@{}}Den-\\ network\end{tabular} & DPDNN & \begin{tabular}[c]{@{}c@{}}Den-\\ network\end{tabular} & DPDNN \\ \hline
			\multirow{3}{*}{Bicubic} & 2 & 33.12                                                 & 33.30 & 31.98                                                  & 32.10 & 31.21                                                      & 31.50 \\ \cline{2-8}
			& 3 & 29.39                                                  & 30.02 & 28.46                                                  & 29.00 & 26.98                                                  & 27.61 \\ \cline{2-8}
			& 4 & 27.80                                                  & 28.28 & 27.19                                                  & 27.43 & 25.02                                                  & 25.53 \\ \hline
			Gaussian                  & 3 & 29.49                                                  & 29.88 & 28.51                                                     & 28.89    &  25.96                                                    & 26.12    \\ \hline
		\end{tabular}
	\end{center}
\end{table}

\subsection{Image denoising}

For image denoising, $\tA=\tI$ and \textbf{Algorithm 1} reduce to the iterative denoising process, i.e., the weighted noise image is added back to the denoised image for the next denoising process. Such iterative denoising has shown improvements over conventional denoising methods that only denoise once \cite{CSR}. Here, we also found that implementing multiple denoising iterations in the proposed network improves the denoising results. To train the network, we extracted image patches of size $40\times 40$ from the training images and added additive Gaussian noise to the extracted patches to generate the noisy patches. Totally $N=450,000$ patches were extracted for training. Note that none of the test images was included into the training image set. The training patches were also augmented by flip and rotations. We compared the proposed network with several leading denoising methods, including three model-based denoising methods, i.e., BM3D method \cite{BM3D}, the EPLL method \cite{Zoran:ICCV11}, and the low-rank based method WNNM method \cite{WNNM}, and three deep learning based methods, i.e., the TNRD method \cite{TNRD}, the DnCNN-S method \cite{Zhang:TIP17} and the MemNet \cite{Tai:ICCV17}.

Table \ref{table:denoising} shows the PSNR results of the competing methods on a set of commonly used test images shown in Fig. \ref{fig:den0}. It can be seen that the MemNet method performs comparable with the DnCNN-S method for low noise levels and outperforms DnCNN-S for higher noise level. The proposed network slightly outperforms the MemNet method by up to $0.2$ dB on average. To further verify the effectiveness of the proposed method, we also employ the Berkeley segmentation dataset (BSD68) that contains 68 natural images for comparison study. Table \ref{table:denoising2} shows the average PSNR and SSIM results of the test methods on BSD68. One can seen that the PSNR gains over the other test methods become even larger for higher noise levels. The proposed method outperforms the MemNet method by up to $0.65$ dB on average on the BSD68, demonstrating the effectiveness of the proposed method. Parts of the denoised images by the test methods are shown in Figs. \ref{fig:den1}-\ref{fig:den2}. One can see that the image edges and textures recovered by model-based methods, i.e., BM3D, WNNM and EPLL are over-smoothed. The deep learning based methods, TNRD, DnCNN-S, MemNet and the proposed method produce much more visually pleasant image structures. Moreover, the proposed method generates even better results in recovering image details than TNRD, DnCNN-S and MemNet methods.

\begin{figure*}[!tbh]
\renewcommand{\arraystretch}{0.4}
\centering
\subfigure[C.Man]{
    \includegraphics[width=0.06\textwidth]{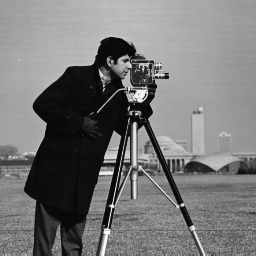}
    }
\subfigure[House]{
    \includegraphics[width=0.06\textwidth]{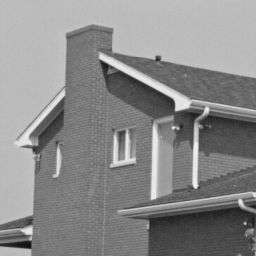}
    }
\subfigure[Peppers]{
    \includegraphics[width=0.06\textwidth]{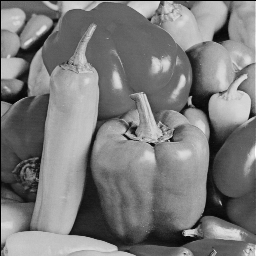}
    }
\subfigure[Starfish]{
    \includegraphics[width=0.06\textwidth]{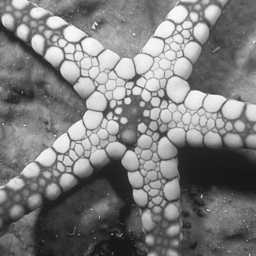}
    }
\subfigure[Monar.]{
    \includegraphics[width=0.06\textwidth]{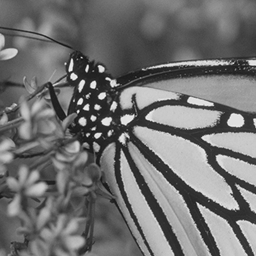}
    }
\subfigure[Airpl.]{
    \includegraphics[width=0.06\textwidth]{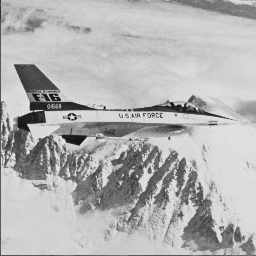}
    }
\subfigure[Parrot]{
    \includegraphics[width=0.06\textwidth]{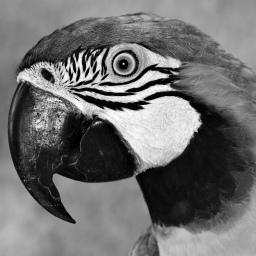}
    }
\subfigure[Lena]{
    \includegraphics[width=0.06\textwidth]{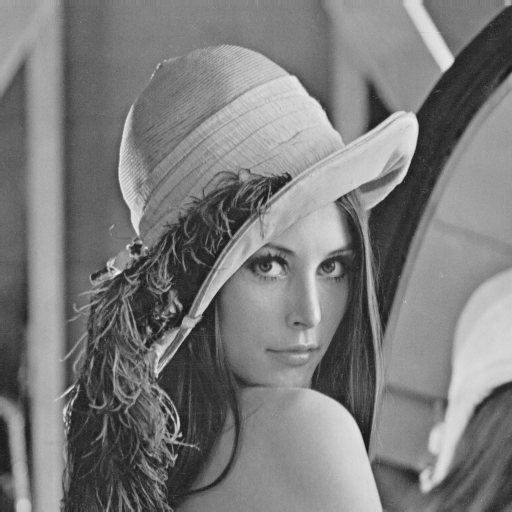}
    }
\subfigure[Barbara]{
    \includegraphics[width=0.06\textwidth]{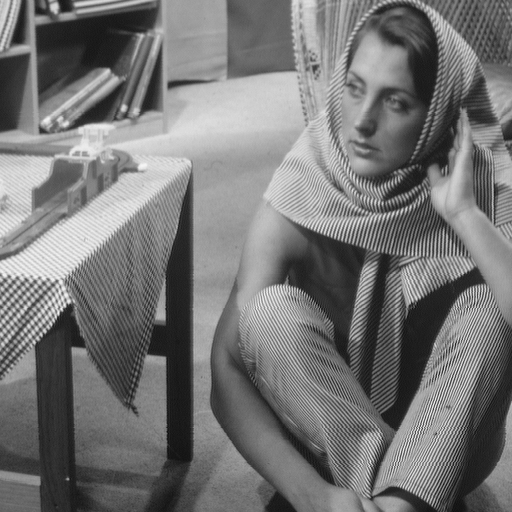}
    }
\subfigure[Boat]{
    \includegraphics[width=0.06\textwidth]{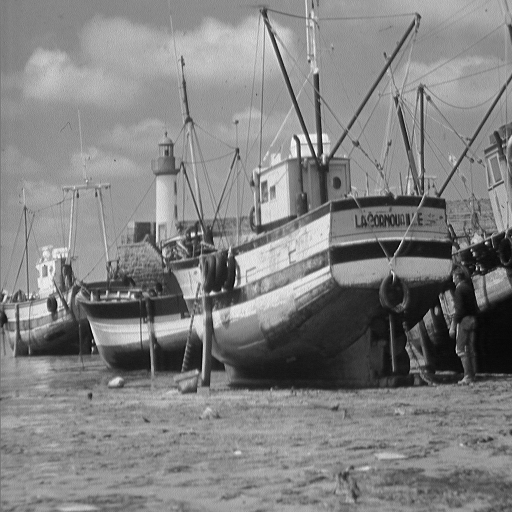}
    }
\subfigure[Man]{
    \includegraphics[width=0.06\textwidth]{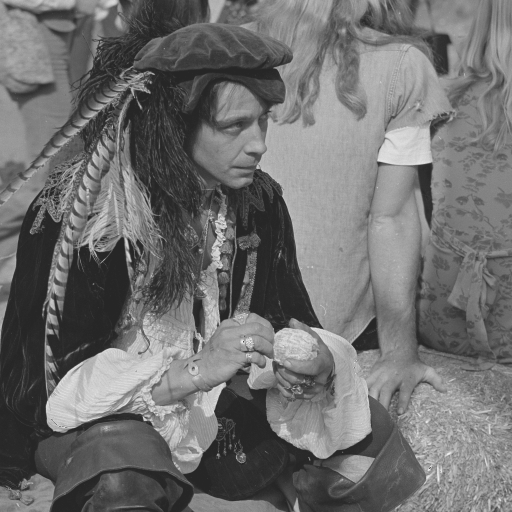}
    }
\subfigure[Couple]{
    \includegraphics[width=0.06\textwidth]{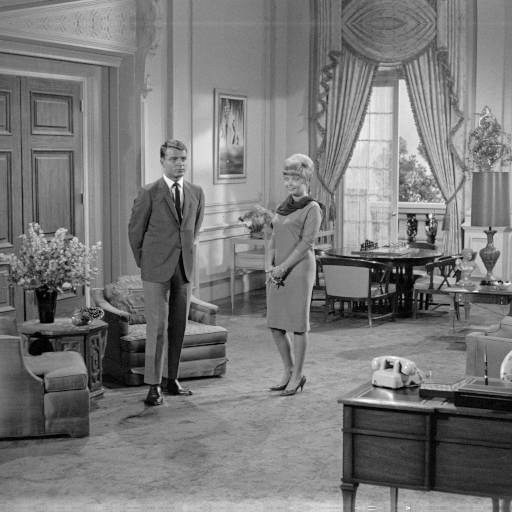}
    }
    \\
     \caption{The test images used for image denoising.}
     \label{fig:den0}
\end{figure*}

\begin{table*}[tbh]
\centering
\caption{The PSNR (dB) results of the denoised images by the test methods on a set of test images.}
\setlength{\tabcolsep}{4.5pt}
\label{table:denoising}
	\begin{tabular}{!{\vrule width1.2pt}c!{\vrule width1.2pt}c|c|c|c|c|c|c|c|c|c|c|c!{\vrule width1.2pt}c!{\vrule width1.2pt}}
       \Xhline{1.2pt}
		IMAGE       & C.Man          & House          & Peppers        & Starfish       & Monar          & Airpl          & Parrot         & Lena           & Barbara        & Boat           & Man            & Couple         & Average        \\ \Xhline{1.2pt}
		Noise Level & \multicolumn{13}{c|}{$\sigma=15$}                                                                                                                                                                                      \\ \Xhline{1.2pt}
		BM3D        & 31.92          & 34.94          & 32.70          & 31.15          & 31.86          & 31.08          & 31.38          & 34.27          & 33.11          & 32.14          & 31.93          & 32.11          & 32.38          \\ \hline
		WNNM        & 32.18          & 35.15          & 32.97          & 31.83          & 32.72          & 31.40          & 31.61          & 34.38          & \textbf{33.61} & 32.28          & 32.12          & 32.18          & 32.70          \\ \hline
		EPLL        & 31.82          & 34.14          & 32.58          & 31.08          & 32.03          & 31.16          & 31.40          & 33.87          & 31.34          & 31.91          & 31.97          & 31.90          & 32.10          \\ \hline
		TNRD        & 32.19          & 34.55          & 33.03          & 31.76          & 32.57          & 31.47          & 31.63          & 34.25          & 32.14          & 32.15          & 32.24          & 32.11          & 32.51          \\ \hline
		DnCNN-S     & \textbf{32.62} & 35.00          & \textbf{33.29} & \textbf{32.23} & 33.10          & 31.70          & \textbf{31.84} & 34.63          & 32.65          & 32.42          & 32.47          & 32.47          & 32.87          \\ \hline
		MemNet      &  32.51         &  35.10         & 33.31          & 32.12          & 33.04          & 31.53          & 31.73          & 34.64          & 32.65          & 32.43          & 32.45          & 32.49          & 32.83          \\ \hline
		Ours        & 32.44          & \textbf{35.40} & 33.19          & 32.08          & \textbf{33.33} & \textbf{31.78} & 31.48          & \textbf{34.80} & 32.84          & \textbf{32.55} & \textbf{32.53} & \textbf{32.51} & \textbf{32.91} \\ \Xhline{1.2pt}
		Noise Level & \multicolumn{13}{c|}{$\sigma=25$}                                                                                                                                                                                      \\ \Xhline{1.2pt}
		BM3D        & 29.45          & 32.86          & 30.16          & 28.56          & 29.25          & 28.43          & 28.93          & 32.08          & 30.72          & 29.91          & 29.62          & 29.72          & 29.98          \\ \hline
		WNNM        & 29.64          & 33.23          & 30.40          & 29.03          & 29.85          & 29.69          & 29.12          & 32.24          & \textbf{31.24} & 30.03          & 29.77          & 29.82          & 30.26          \\ \hline
		EPLL        & 29.24          & 32.04          & 30.07          & 28.43          & 29.30          & 28.56          & 28.91          & 31.62          & 28.55          & 29.69          & 29.63          & 29.48          & 29.63          \\ \hline
		TNRD        & 29.71          & 32.54          & 30.55          & 29.02          & 29.86          & 28.89          & 29.18          & 32.00          & 29.41          & 29.92          & 29.88          & 29.71          & 30.66          \\ \hline
		DnCNN-S     & \textbf{30.19} & 33.09          & 30.85          & 29.40          & 30.23          & 29.13          & \textbf{29.42} & 32.45          & 30.01          & 30.22          & 30.11          & 30.12          & 30.43          \\ \hline
		MemNet      & 30.02          & 33.25          & 30.87          & 29.35          & 30.24          & 29.03          & 29.30          & 32.51          & 29.98          & 30.21          & 30.08          & 30.14          & 30.41          \\ \hline
		Ours        & 30.12          & \textbf{33.54} & \textbf{30.90} & \textbf{29.43} & \textbf{30.31} & \textbf{29.14} & 29.28          & \textbf{32.69} & 30.30          & \textbf{30.34} & \textbf{30.15} & \textbf{30.24} & \textbf{30.54} \\ \Xhline{1.2pt}
		Noise Level & \multicolumn{13}{c|}{$\sigma=50$}                                                                                                                                                                                      \\ \Xhline{1.2pt}
		BM3D        & 26.13          & 29.69          & 26.68          & 25.04          & 25.82          & 25.10          & 25.90          & 29.05          & 27.23          & 26.78          & 26.81          & 26.46          & 26.73          \\ \hline
		WNNM        & 26.42          & 30.33          & 26.91          & 25.43          & 26.32          & 25.42          & 26.09          & 29.25          & \textbf{27.79} & 26.97          & 26.94          & 26.64          & 27.04          \\ \hline
		EPLL        & 26.02          & 28.76          & 26.63          & 25.04          & 25.78          & 25.24          & 25.84          & 28.43          & 24.82          & 26.65          & 26.72          & 26.24          & 26.35          \\ \hline
		TNRD        & 26.62          & 29.48          & 27.10          & 25.42          & 26.31          & 25.59          & 26.16          & 28.93          & 25.70          & 26.94          & 26.98          & 26.50          & 26.81          \\ \hline
		DnCNN-S     & 27.00          & 30.02          & 27.29          & 25.70          & 26.77          & 25.87          & \textbf{26.48} & 29.37          & 26.23          & 27.19          & 27.24          & 26.90          & 27.14          \\ \hline
		MemNet      & 27.24          & 30.70          & 27.51          & 25.76          & 27.19          & 25.96          & 26.50          & 29.63          & 26.68          & 27.30          & 27.24      & 27.14          & 27.40          \\ \hline
		Ours        & \textbf{27.12} & \textbf{31.04} & \textbf{27.44} & \textbf{25.95} & \textbf{27.00} & \textbf{25.97} & 26.42          & \textbf{29.85} & 27.21          & \textbf{27.42} & \textbf{27.32} & \textbf{27.23} & \textbf{27.50} \\ \Xhline{1.2pt}
	\end{tabular}
\end{table*}

\begin{table*}[tbh]
\centering
\caption{The average PSNR (dB) results of the competing methods on BSD68 image set. }
\setlength{\tabcolsep}{4.5pt}
\label{table:denoising2}
\begin{tabular}{!{\vrule width1.2pt}c!{\vrule width1.2pt}c!{\vrule width1.2pt}c
|c|c|c|c|c|c|c|c|c|c|c!{\vrule width1.2pt}}
\Xhline{1.2pt}
\multirow{2}{*}{Dataset}  & \multirow{2}{*}{$\sigma$} & \multicolumn{2}{c|}{BM3D} & \multicolumn{2}{c|}{EPLL} & \multicolumn{2}{c|}{TNRD} & \multicolumn{2}{c|}{DnCNN-S}   & \multicolumn{2}{c|}{MemNet} & \multicolumn{2}{c!{\vrule width1.2pt}}{Ours} \\ \Xcline{3-14}{1.2pt}
                          &                                 & PSNR        & SSIM        & PSNR         & SSIM        & PSNR        & SSIM        & PSNR         & SSIM        & PSNR         & SSIM     & PSNR        & SSIM        \\ \Xhline{1.2pt}
\multirow{3}{*}{BSD68}    & 15                      & 31.08            &     0.872 &           31.19 	 &     0.883       &     31.42         &       0.883       &    31.74    &    \textbf{0.891}   & 32.14  & 0.884 &     \textbf{32.29}  &    0.888     \\ \cline{2-14}
                         & 25                      & 28.57	           &     0.802         &            28.68 &      0.812       &     28.91        &      0.816         &    29.23  	   &    \textbf{ 0.828  }  & 29.77  & 0.819  &      \textbf{29.88} &     0.827    \\ \cline{2-14}
                          & 50                      & 25.62	           &     0.687     &          25.68   &      0.688       &      25.96	       &       0.702       &   26.24     &     0.719    & 26.37 & \textbf{0.729} &    \textbf{27.02}     &  0.726 \\ \Xhline{1.2pt}
\end{tabular}
\end{table*}

\begin{figure*}[!tbh]
\renewcommand{\arraystretch}{0.4}
\centering
\subfigure[Original]{
    \includegraphics[width=0.16\textwidth]{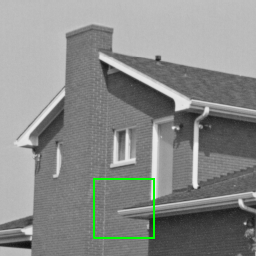}
    }\hspace{-0.8em}

\subfigure[WNNM]{
    \includegraphics[width=0.16\textwidth]{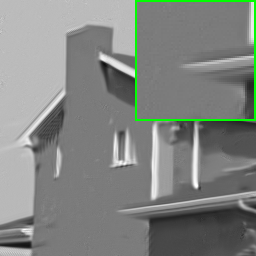}
    }\hspace{-0.8em}
\subfigure[TNRD]{
    \includegraphics[width=0.16\textwidth]{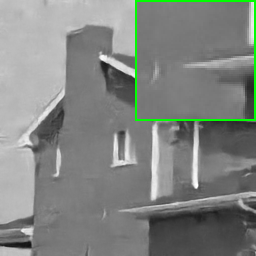}
    }\hspace{-0.8em}
\subfigure[DnCNN-S]{
    \includegraphics[width=0.16\textwidth]{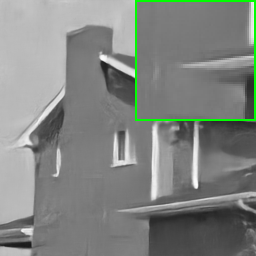}
    }\hspace{-0.8em}
\subfigure[MemNet]{
	\includegraphics[width=0.16\textwidth]{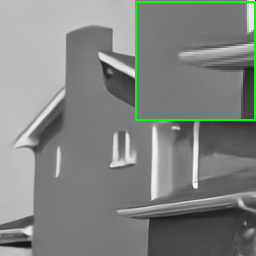}
}\hspace{-0.8em}
\subfigure[Ours]{
    \includegraphics[width=0.16\textwidth]{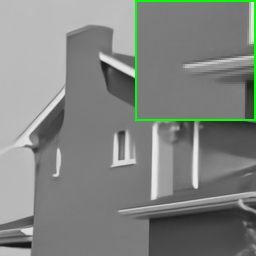}
    }
    \\
        \caption{Denoising results for \emph{House} image with noise level 50. (a) Original image; images denoised by  (b) WNNM \cite{WNNM} (30.33 dB); (c) TNRD \cite{TNRD} (29.48 dB); (d) DnCNN-S\cite{Zhang:TIP17} (30.02 dB);(e) MemNet \cite{Tai:ICCV17} (30.70 dB); (f) Ours (\textbf{31.04} dB).}
    \label{fig:den1}

\end{figure*}

\begin{figure*}[!tbh]
\renewcommand{\arraystretch}{0.4}
\centering
\subfigure[Original]{
    \includegraphics[width=0.16\textwidth]{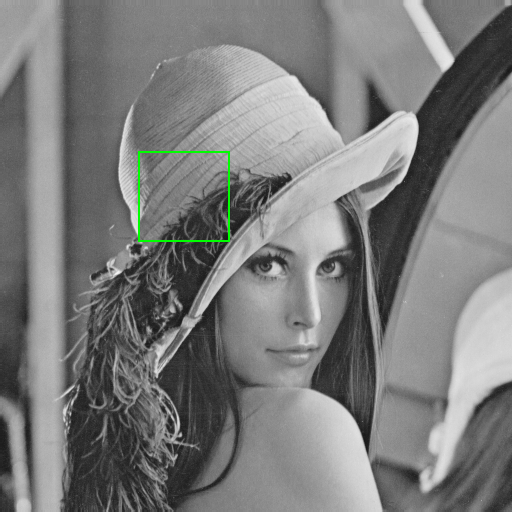}
    }\hspace{-0.8em}

\subfigure[WNNM]{
    \includegraphics[width=0.16\textwidth]{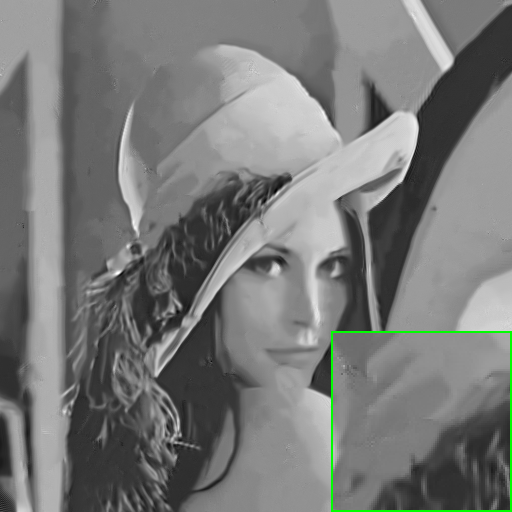}
    }\hspace{-0.8em}
\subfigure[TNRD]{
    \includegraphics[width=0.16\textwidth]{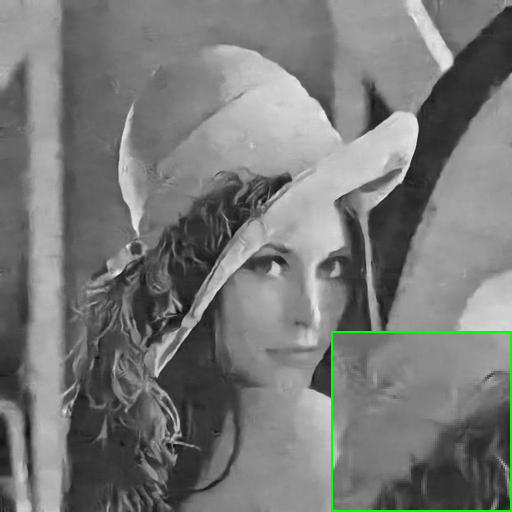}
    }\hspace{-0.8em}
\subfigure[DnCNN-S]{
    \includegraphics[width=0.16\textwidth]{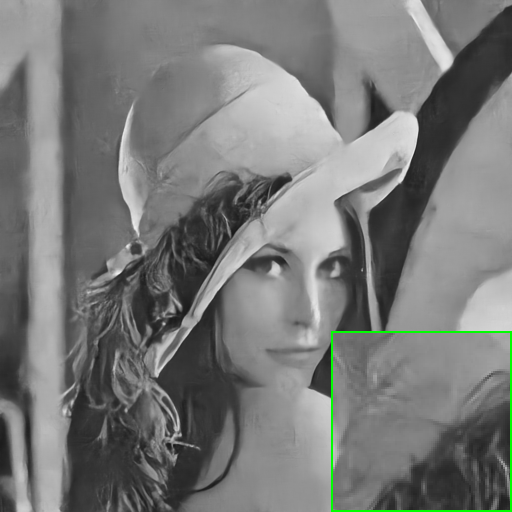}
    }\hspace{-0.8em}
\subfigure[MemNet]{
	\includegraphics[width=0.16\textwidth]{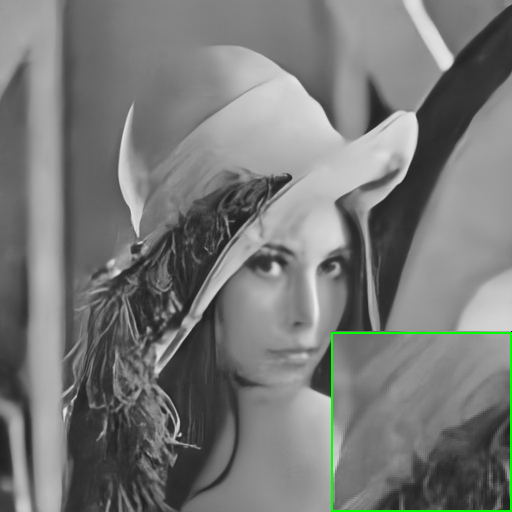}
}\hspace{-0.8em}
\subfigure[Ours]{
    \includegraphics[width=0.16\textwidth]{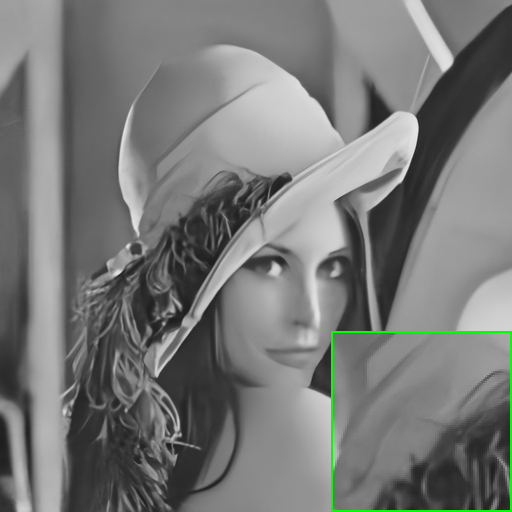}
    }
    \\
         \caption{Denoising results for \emph{Lena} image with noise level 50. (a) Original image  images denoised by  (b) WNNM\cite{WNNM} (29.25 dB); (c) TNRD \cite{TNRD} (28.93 dB); (d) DnCNN-S \cite{Zhang:TIP17} (29.37 dB); (e) MemNet \cite{Tai:ICCV17} (29.63 dB);(f) Ours (\textbf{29.85} dB).}
    \label{fig:den2}

\end{figure*}

\subsection{Image deblurring}

To train the proposed network for image deblurring, we first convoluted the training images with a blur kernel to generate the blurred images and then extracted the training image patches of size $120\times 120$ from the blurred images. The additive Gaussian noise of standard deviation $\sigma_n$ was also added to the blurred images. Patch augmentation with flips and rotations were adopted, generating total $450,000$ patches for training. Two types of blur kernels were considered, i.e., the $25\times 25$ Gaussian blur kernel of standard deviation $1.6$ and two motion blur kernels adopted in \cite{Levin:CVPR09} of sizes $19\times 19$ and $17\times 17$. We trained each model for different blur settings. We compared the proposed method with several leading deblurring methods, i.e., three leading model-based deblurring methods (EPLL \cite{Zoran:ICCV11}, IDDBM3D \cite{IDDBM3D} and NCSR \cite{NCSR}) and the current state-of-the-art denoising-based deblurring method with CNN denoisers \cite{Zhang:CVPR17} (denoted as DD-CNN). We have also compared to the MemNet \cite{Tai:ICCV17} method. We trained MemNet using pairs of blurred image patches and the original image patches. Note that for fair comparisons the same training image patches were used for both the proposed network and MemNet. The test images involved in this comparison study are shown in Fig. \ref{fig:deblur0}. In this experiment, we only conduct deconvolution for grayscale images. However, the proposed method can be easily extended for color image deblurring.

The PSNR results of the test deblurring methods are reported in Table \ref{tab:deblur1}. For fair comparisons, all the PSNRs of the other methods (except MemNet) are generated by the codes released by the authors or directly written according to their papers. From table \ref{tab:deblur1}, we can see that the MemNet method performs much better than conventional model-based EPLL, IDDBM3D and NCSR methods. The proposed method outperforms the MemNet method by up to $0.44$ dB on average. For motion blur kernels with higher noise levels, the proposed method is slightly worse than DD-CNN method that requires much more iterations (up to $30$ iterations) for satisfied results. Parts of the deblurred images by the competing methods are shown in Figs. \ref{fig:deblur1}-\ref{fig:deblur4}. From Figs. \ref{fig:deblur1}-\ref{fig:deblur4}, one can see that the proposed method not only produces more sharper edges but also recovers more details than the other methods.

\begin{figure*}[!tbh]
\renewcommand{\arraystretch}{0.4}
\centering
\subfigure[barbara]{
    \includegraphics[width=0.07\textwidth]{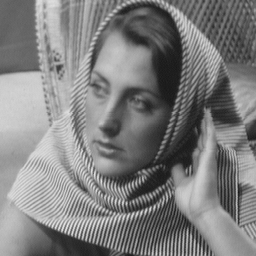}
    }
\subfigure[boats]{
    \includegraphics[width=0.07\textwidth]{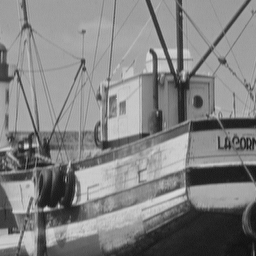}
    }
\subfigure[Butterfly]{
    \includegraphics[width=0.07\textwidth]{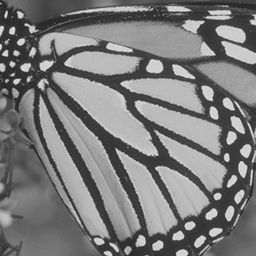}
    }
\subfigure[C.Man]{
    \includegraphics[width=0.07\textwidth]{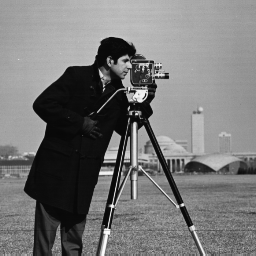}
    }
\subfigure[house]{
    \includegraphics[width=0.07\textwidth]{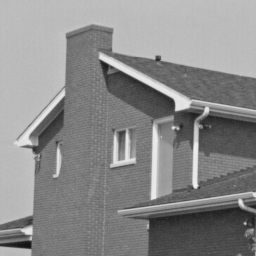}
    }
\subfigure[leaves]{
    \includegraphics[width=0.07\textwidth]{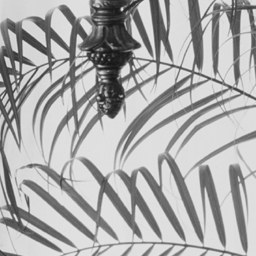}
    }
\subfigure[lena256]{
    \includegraphics[width=0.07\textwidth]{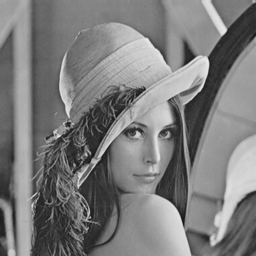}
    }
\subfigure[Parrots]{
    \includegraphics[width=0.07\textwidth]{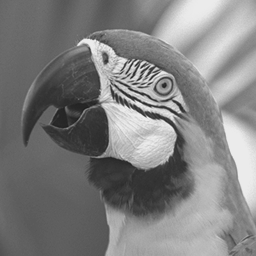}
    }
\subfigure[peppers]{
    \includegraphics[width=0.07\textwidth]{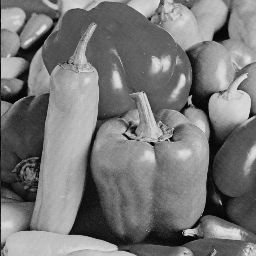}
    }
\subfigure[Starfish]{
    \includegraphics[width=0.07\textwidth]{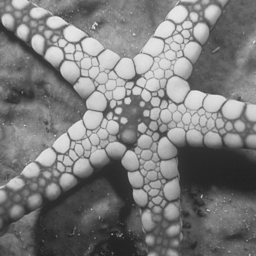}
    }
    \\
     \caption{The test images used for image deblurring.}
     \label{fig:deblur0}
\end{figure*}

\begin{table*}[tbh]
\centering
\caption{The PSNR results of the deblurred images by the test methods. }
\setlength{\tabcolsep}{4.5pt}
\label{tab:deblur1}
\begin{tabular}{!{\vrule width1.2pt}c!{\vrule width1.2pt}c!{\vrule width1.2pt}c|c|c|c|c|c|c|c|c|c|c!{\vrule width1.2pt}}
\Xhline{1.2pt}
Methods  & $\sigma_n$                      & Butterfly & Peppers & Parrot & starfish & Barbara & Boats & C.Man & House  & Leaves & Lena     & Average \\ \Xhline{1.2pt}
\multicolumn{13}{!{\vrule width1.2pt}c!{\vrule width1.2pt}}{Gaussian Blur with standard deviation 1.6}                                                                       \\ \Xhline{1.2pt}
IDD-BM3D & \multirow{5}{*}{2}    &   29.79    &    29.64 &   31.90   &   30.57   &    25.99    &     31.17 &     27.68    &     33.56     &    30.13     &   30.91     &    30.13     \\ \cline{1-1} \cline{3-13}
EPLL     &                       &      25.78     &   26.73    &   31.32    &   28.52    &    24.22    &    28.84  &        26.57 &    31.76      &     25.29    &    29.46    &   27.85      \\ \cline{1-1} \cline{3-13}
NCSR     &                       &     29.72      &     30.04 &   32.07    &     30.83  &   \textbf{26.54}    &   31.22     &   27.99   &     33.38    &    30.13      &   30.99         &   30.29      \\ \cline{1-1} \cline{3-13}
DD-CNN    &                       &     30.44      &   \textbf{30.69}    &   31.83    &   30.78    &       26.15 &   31.41   &    28.05     &   33.80      &  30.44     &   31.05    &   30.48      \\ \cline{1-1} \cline{3-13}
MemNet  &                       & \textbf{30.68}          &  29.79     & \textbf{32.45}     &  31.40    &  25.82     &  31.35    &   28.23      &   33.75      &  \textbf{30.62}     &  31.33     & 30.54        \\ \cline{1-1} \cline{3-13}
Ours     &                       &      30.67     &   30.18  & 32.40    &    \textbf{32.00}   &   26.47    &       \textbf{31.54} &   \textbf{28.24}   &    \textbf{34.25}     &    30.23     &   \textbf{31.48}           &    \textbf{30.75}    \\ \Xhline{1.2pt}

\multicolumn{13}{!{\vrule width1.2pt}c!{\vrule width1.2pt}}{$19\times 19$ motion blur kernel 1 of \cite{Levin:CVPR09}}                                                                                                 \\ \Xhline{1.2pt}
EPLL     & \multirow{3}{*}{2.55} &      26.23     &   27.40    &    33.78   &  29.79     &    29.78    &  30.15    &        30.24 &     31.73     &    25.84     &    31.37    &     29.63    \\ \cline{1-1} \cline{3-13}
DD-CNN    &                       &    32.23       &    32.00   &   34.48    &   32.26    &       32.38 &   33.05   &   31.50      &    34.89      &     33.29    &  33.54      &    32.96     \\ \cline{1-1} \cline{3-13}
MemNet   &                       &   32.19    &  31.67     &  34.47     & 32.47     &  32.10      &  33.15    &   31.29      &   34.57      &   32.16     &  33.40     &   32.75     \\ \cline{1-1} \cline{3-13}
Ours     &                       &     \textbf{32.58}      &   \textbf{32.05}    &  \textbf{34.98}     &   \textbf{32.71}    &       \textbf{32.39} &  \textbf{33.39}    &     \textbf{31.70}    &     \textbf{35.34}     &    \textbf{32.99}     &  \textbf{33.80}      &    \textbf{ 33.19 }    \\ \Xhline{1.2pt}

EPLL     & \multirow{3}{*}{7.65} &     24.27      &   26.15    &   30.01    &  26.81     &    26.95    &  27.72    &        27.37 &     29.89     &     23.81    &   28.69     &    27.17     \\ \cline{1-1} \cline{3-13}
DD-CNN    &                       &   \textbf{28.51}       &   \textbf{28.88}   &  \textbf{31.07}   &  27.86     &       \textbf{28.18} &   29.13   &     \textbf{28.11}    &   32.03       &   \textbf{28.42}      &  \textbf{29.52}     &    \textbf{29.17}     \\ \cline{1-1} \cline{3-13}
MemNet   &               &    28.32     & 28.42     &   30.89    & \textbf{28.02}   &  27.90     &  28.99     &   27.61     &   31.93     &   27.55    &  29.34    &   28.89     \\ \cline{1-1} \cline{3-13}
Ours     &                       &     28.24     &   28.42   &   31.03    & 28.00   &  28.01 & \textbf{29.19}    &       27.77 &   \textbf{32.06}   &    27.98     &    29.42    &   29.01 \\ \Xhline{1.2pt}

\multicolumn{13}{!{\vrule width1.2pt}c!{\vrule width1.2pt}}{ $17\times 17$ motion blur kernel 2 of \cite{Levin:CVPR09}}                                                                                                 \\ \Xhline{1.2pt}
EPLL     & \multirow{3}{*}{2.55} &     26.48      &   27.37    &   33.88    &   29.56    &   28.29     &   29.61   &     29.66    &   32.97       &    25.69     &   30.67     &     29.42    \\ \cline{1-1} \cline{3-13}
DD-CNN    &                       &   \textbf{31.97}       &   \textbf{31.89}    &    34.46   &   32.18    &       \textbf{32.00} &  \textbf{33.06}    &    \textbf{31.29}     &     34.82     &    \textbf{32.96}     &   \textbf{33.35}     &    \textbf{32.80}     \\ \cline{1-1} \cline{3-13}
MemNet   &                       &  31.70       &  30.78    &  34.51     & 31.99    &  30.92     & 32.54      &  30.86      &  34.84      &  31.88     &  33.11    &   32.31     \\ \cline{1-1} \cline{3-13}
Ours     &                      &        31.86    &    31.38         &    \textbf{34.72}   &  \textbf{32.28}     &   31.36    &     32.86   &   31.21   &    \textbf{35.09}     &    32.29      &    \textbf{33.35}     &   32.64      \\ \Xhline{1.2pt}

EPLL     & \multirow{3}{*}{7.65} &     23.85      &   26.04    &   29.99    &    26.78   &   25.47     &   27.46   &    26.58     &     30.49     &    23.42     &    28.20    &     26.83    \\ \cline{1-1} \cline{3-13}
DD-CNN    &                       &      \textbf{28.21}     &   \textbf{28.71}    &   \textbf{30.68}    &   27.67    &       \textbf{27.37} &   \textbf{28.95}   &   \textbf{27.70}      &    \textbf{31.95}      &    \textbf{27.92}     &    \textbf{29.27}    &    \textbf{28.84}     \\ \cline{1-1} \cline{3-13}
MemNet   &                       &   27.55    &   27.60    &  30.57     &  27.55     &  26.53  &  28.68   &   27.28     &  31.61  &   27.02     &  29.10      &   28.35      \\ \cline{1-1} \cline{3-13}
Ours    &                       &      27.47     &   28.02 &    30.46   &   \textbf{27.82}    &   26.86    &    28.84    &    27.48  &    31.91     &    27.28      &    29.23          &         28.54\\ \Xhline{1.2pt}
\end{tabular}
\end{table*}

\begin{figure*}[!tbh]
\renewcommand{\arraystretch}{0.4}
\centering
\subfigure[Original]{
    \includegraphics[width=0.16\textwidth]{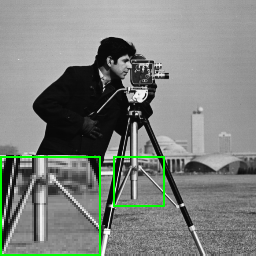}
    }\hspace{-0.8em}

\subfigure[EPLL]{
    \includegraphics[width=0.16\textwidth]{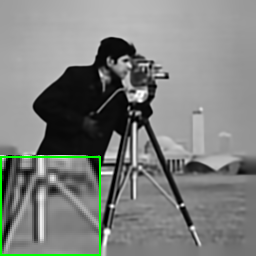}
    }\hspace{-0.8em}
\subfigure[NCSR]{
    \includegraphics[width=0.16\textwidth]{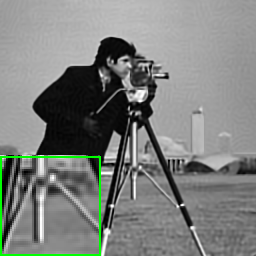}
    }\hspace{-0.8em}
\subfigure[DD-CNN]{
    \includegraphics[width=0.16\textwidth]{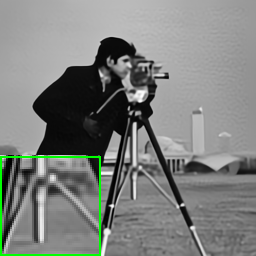}
    }\hspace{-0.8em}
\subfigure[MemNet]{
	\includegraphics[width=0.16\textwidth]{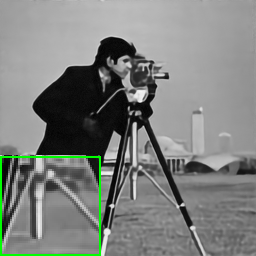}
}\hspace{-0.8em}
\subfigure[Ours]{
    \includegraphics[width=0.16\textwidth]{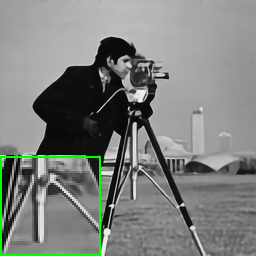}
    }
    \\
        \caption{Deblurring results for \emph{Cameraman} image with $25\times 25$ Gaussian blur kernel and $\sigma_n=2$. (a) Original image; deblurred images by  (b) EPLL denoiser \cite{Zoran:ICCV11} (26.57 dB); (c) NCSR \cite{NCSR} (27.99 dB);(d) DD-CNN \cite{Zhang:CVPR17} (28.05 dB);(e) MemNet \cite{Tai:ICCV17} (28.23 dB); (f) Ours (\textbf{28.24} dB).}
    \label{fig:deblur1}

\end{figure*}

\begin{figure*}[!tbh]
\renewcommand{\arraystretch}{0.4}
\centering
\subfigure[Original]{
    \includegraphics[width=0.19\textwidth]{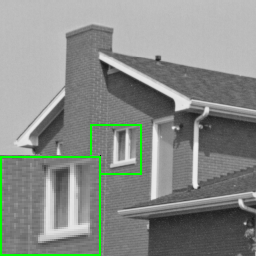}
    }\hspace{-0.8em}
\subfigure[EPLL]{
    \includegraphics[width=0.19\textwidth]{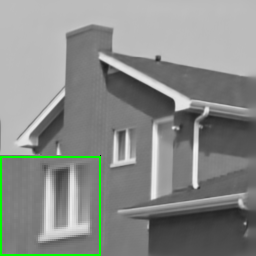}
    }\hspace{-0.8em}
\subfigure[DD-CNN]{
    \includegraphics[width=0.19\textwidth]{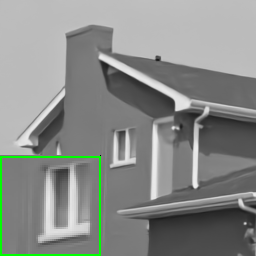}
    }\hspace{-0.8em}
\subfigure[MemNet]{
	\includegraphics[width=0.19\textwidth]{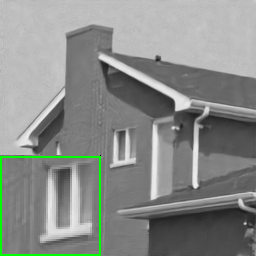}
}\hspace{-0.8em}
\subfigure[Ours]{
    \includegraphics[width=0.19\textwidth]{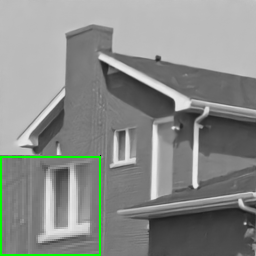}
    }
    \\
     \caption{Deblurring results for \emph{house} image with $19\times 19$ motion blur kernel 1 and $\sigma_n=2.55$. (a) Original image; images deblurred by (b) EPLL \cite{Zoran:ICCV11} (31.73 dB); (c) DD-CNN\cite{Zhang:CVPR17} (34.89 dB);(d) MemNet \cite{Tai:ICCV17} (34.57 dB); (e) Ours (\textbf{35.34} dB).}
     \label{fig:deblur3}
\end{figure*}

\begin{figure*}[!tbh]
\renewcommand{\arraystretch}{0.4}
\centering
\subfigure[Original]{
    \includegraphics[width=0.19\textwidth]{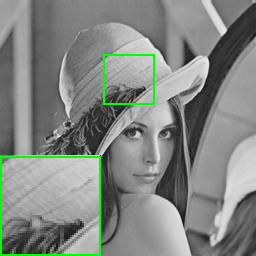}
    }\hspace{-0.8em}
\subfigure[EPLL]{
    \includegraphics[width=0.19\textwidth]{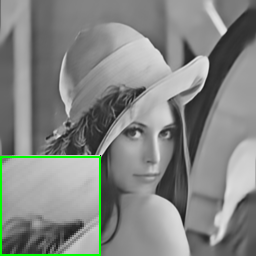}
    }\hspace{-0.8em}
\subfigure[DD-CNN]{
    \includegraphics[width=0.19\textwidth]{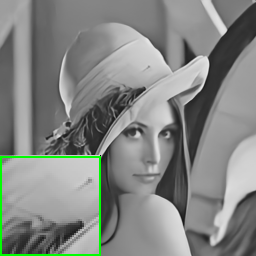}
    }\hspace{-0.8em}
\subfigure[MemNet]{
	\includegraphics[width=0.19\textwidth]{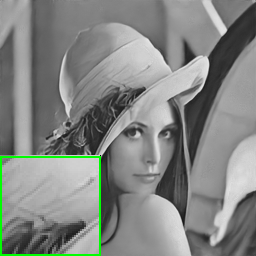}
}\hspace{-0.8em}
\subfigure[Ours]{
    \includegraphics[width=0.19\textwidth]{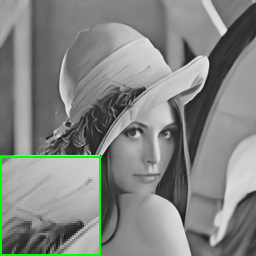}
    }
    \\
     \caption{Deblurring results for \emph{lena} image with $19\times 19$ motion blur kernel 1 and $\sigma_n=2.55$. (a) Original image; images deblurred by (b) EPLL \cite{Zoran:ICCV11} (31.37 dB); (c) DD-CNN\cite{Zhang:CVPR17} (33.54 dB);  (d)MemNet \cite{Tai:ICCV17} (33.40 dB); (d)Ours (\textbf{33.80} dB).}
     \label{fig:deblur4}
\end{figure*}

\subsection{Image super-resolution}

For image super-resolution, we consider two image subsampling operators, i.e., the bicubic downsampling and the Gaussian downsampling. For the former case, the HR images are downsampled by applying the bicubic interpolation function with scaling factor $1/s$ ($s=2, 3, 4$) to simulate the LR images. For the latter case, the LR images are generated by applying the Gaussian blur kernel to the original images followed by subsampling. The $7\times 7$ Gaussian blur kernel of standard deviation of 1.6 is used in this case. The LR/HR patch pairs are extracted from the LR/HR training image pairs and augmented by flip and rotations, generating $450,000$ patch pairs. The LR patch size is $32\times 32$, while the HR patch size is $(32*s)\times (32*s)$. We train each network for the two downsampling cases. The image data sets commonly used in the image SR literature are adopted for performance verification, including the Set5, Set14, BSD100, and the Urban100 dataset \cite{VDSR} containing 100 high-quality images. We compared the proposed method with several leading image SR methods, including two DCNN based SR methods (SRCNN \cite{SRCNN}, VDSR \cite{VDSR} and MemNet \cite{Tai:ICCV17}) and two denoising methods (TNRD \cite{TNRD} and DnCNN \cite{Zhang:TIP17}), which produce the HR images by first upsampling the LR images with the bicubic interpolator and then denoising the upsampled images to recovery the high-frequency details. For fair comparisons, the results of the others are directly borrowed from their papers or generated by the codes released by the authors.

The PSNR results of the test methods for bicubic downsampling are reported in Tables \ref{tab:SR2}. From Table \ref{tab:SR2}, we can see that the proposed method and the MemNet method outperform other methods on average for Set5. The MemNet method is slightly better than the proposed method on average for this dataset. As shown in Table \ref{tab:SR2}, on average the proposed method slightly outperforms the MemNet, which is the second best method in this comparison group. The PSNR results of the test methods for Gaussian downsampling with scaling factor 3 are reported in Table \ref{tab:SR3}. For this case, we compare the proposed method with DD-CNN \cite{Zhang:CVPR17}, which can achieve much better results than their earlier DnCNN \cite{Zhang:TIP17}. We have also compared with the MemNet method \cite{Tai:ICCV17}. For fair comparisons, we also retrained the MemNet model using the pairs of the LR image patches and the corresponding HR image patches. From Table \ref{tab:SR3}, we can see that the proposed method outperforms the MemNet method by larger margins. The proposed method also outperform the iterative DD-CNN method that uses pre-trained DCNN denoiser. Parts of the reconstructed HR images by the test methods are shown in Fig. \ref{fig:SR1}-\ref{fig:SR3}, from which we can see that the proposed method can produce sharper edges than other methods.

\subsection{Complexity anlaysis}

We compared the proposed network with other two state-of-the-art deep learning based IR methods (i.e., the DnCNN \cite{Zhang:TIP17} and the MemNet \cite{Tai:ICCV17}) in terms of complexity. The number of parameters of each deep network are listed in Table \ref{tab:complexity} \footnote{The number of parameters of each network were counted according to their source code downloaded from the authors' website. }. From Table \ref{tab:complexity}, we can see that the MemNet contains the largest number of parameters, almost three times of the proposed network, as it is very deep (up to 80 layers). Since we enforce each denoiser to share the same parameters, the total number of parameters of the proposed network is much smaller than that of MemNet. Though there are $L=6$ stages in the proposed network, the running time of the proposed network is also smaller than that of MemNet. This is due to the fact that the feature maps in the denoiser were gradually downsampled. Thus, the computational complexity can be much reduced.

\begin{table*}[tbh]
	\centering
	\caption{The PSNR and SSIM results of reconstructed HR images by the test methods for the bicubic downsampling.}
	\label{tab:SR2}
	\begin{tabular}{!{\vrule width1.2pt}c!{\vrule width1.2pt}c|c|c|c|c|c|c|c|c|c|c|c|c!{\vrule width1.2pt}}
		\Xhline{1.2pt}
		\multirow{2}{*}{Dataset}  & \multirow{2}{*}{Scaling factor}    & \multicolumn{2}{c|}{TNRD}& \multicolumn{2}{c|}{SRCNN} & \multicolumn{2}{c|}{VDSR} & \multicolumn{2}{c|}{DnCNN} & \multicolumn{2}{c|}{MemNet} & \multicolumn{2}{c!{\vrule width1.2pt}}{Ours} \\ \cline{3-14}
		&                                    & PSNR         & SSIM        & PSNR        & SSIM        & PSNR        & SSIM       & PSNR         & SSIM        & PSNR           & SSIM         & PSNR           & SSIM        \\ \Xhline{1.2pt}
		
		\multirow{3}{*}{Set5}    & 2                                  &  36.86     &    0.956  &  36.66   &  0.954     &  37.53   &  0.959  &  37.58 &  0.959  & \textbf{37.78}   & 0.959  & 37.75     &  \textbf{0.960}   \\ \cline{2-2} \cline{2-14}
		& 3                                  &  33.18    &  0.915  &  32.75   & 0.909  &  33.66  & 0.921    &  33.75   &  0.922   & \textbf{34.09}  & \textbf{0.925}   &   33.93     & 0.924     \\ \cline{2-2} \cline{2-14}
		& 4                                  &  30.85     &  0.873   &  28.42  &  0.810   &  31.35    &  0.884  &    31.40 &  0.885  & \textbf{31.74}    &  \textbf{0.889}     & 31.72    & \textbf{0.889}    \\ \cline{2-14}  \Xhline{1.2pt}
		
		\multirow{3}{*}{Set14}    & 2                                  & 32.54        &0.907        & 32.42       & 0.906       &33.03        &  0.912     & 33.03        &  0.911      &  33.28   & 0.914      & \textbf{33.30}       &    \textbf{0.915}\\ \cline{2-2} \cline{2-14}
		& 3                                  &      29.46   &    0.823    &   29.28     &   0.821     &   29.77     & 0.831      &    29.82     &  0.830      &  30.00   & 0.835     &    \textbf{30.02}    &    \textbf{0.836 }   \\ \cline{2-2} \cline{2-14}
		& 4                                  &     27.68    &     0.756   &  27.49      &  0.750      &   28.01     &  0.767     &27.83         &  0.755      &  28.26   & 0.772     &   \textbf{ 28.28 }   &  \textbf{0.773}   \\ \cline{2-14}  \Xhline{1.2pt}
		
		\multirow{3}{*}{BSD100}   & 2                                  &      31.40   &   0.888     &     31.36    &    0.888   &     31.90   &0.896       &   31.84      & 0.894       &  32.08   & \textbf{ 0.900}     &    \textbf{ 32.09 }  &   0.899\\ \cline{2-2} \cline{2-14}
		& 3                                  &     28.50    &     0.788   &      28.41   &    0.786   &    28.82    &   0.798    &   28.80      &    0.795    &  28.96   &  0.800    &     \textbf{29.00}   & \textbf{0.801} \\ \cline{2-2} \cline{2-14}
		& 4                                  &      27.00   &    0.714    &      26.90   &    0.710   &    27.29    &   0.725    &  27.08       &  0.709      &  27.40   &  0.728    &    \textbf{ 27.44 }  & \textbf{0.729} \\ \cline{2-14} \Xhline{1.2pt}
		
		\multirow{3}{*}{Urban100} & 2                                  &      29.70   &     0.899   &     29.50    &     0.895  &    30.76    &   0.914    &   30.63      &  0.911      &  31.31   &  0.920    &   \textbf{ 31.50 }   &     \textbf{0.922}\\ \cline{2-2} \cline{3-14}
		& 3                                  &      26.44   &     0.807   &     26.24    &    0.799   &     27.14   &    0.828   &    27.08     &   0.824     &  27.56   & 0.838    &    \textbf{27.61}    &     \textbf{0.842}\\ \cline{2-2} \cline{3-14}
		& 4                                  &     24.62    &     0.729   &     24.52    &    0.722   &     25.18   & 0.752      &   24.94      &     0.735   &  25.50   & 0.763     &     \textbf{ 25.53 } &   \textbf{0.768} \\ \cline{2-14} \Xhline{1.2pt}
	\end{tabular}
\end{table*}

\begin{table*}[tbh]
	\centering
	\caption{The PSNR results of the reconstructed HR images by the test methods for the Gaussian downsampling with scaling factor 3.}
	\begin{tabular}{|p{1cm}<{\centering}|p{1.3cm}<{\centering}|p{1.3cm}<{\centering}|p{1.3cm}<{\centering}|p{1.3cm}<{\centering}|}
		\hline
		Dataset & NCSR  & DD-CNN & \begin{tabular}[c]{@{}c@{}} MemNet\end{tabular} & Ours \\ \hline
		Set5    & 32.07  & 33.88  &   33.75     &  \textbf{34.22}    \\ \hline
		Set14   & 29.30  & 29.63  &   29.44      &  \textbf{29.88}    \\ \hline
	\end{tabular}	\label{tab:SR3}

\end{table*}

\begin{table}[]
		\centering			
		\caption{Complexity comparison with other deep networks. The average running time for an image is measured on a Nvidia Titan XP GPU for the denoising task on the BSD68 dataset for noise level 50.}
	\begin{tabular}{|c|c|c|c|c|}
		\hline
		Method      & DnCNN & MemNet &  DPDNN$_{6}$   \\ \hline
		\#Paras     & 665K  & 2892K  &  1066K \\ \hline
		Run Time(s/image) & 0.048 & 0.278 &  0.161 \\ \hline
		PSNR(dB)    & 26.24 & 26.37  &  27.02 \\ \hline
	\end{tabular} \label{tab:complexity}
\end{table}

\begin{figure*}[!tbh]
\renewcommand{\arraystretch}{0.4}
\centering
\subfigure[Original]{
    \includegraphics[width=0.16\textwidth]{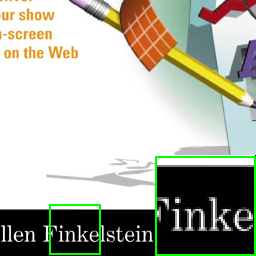}
    }\hspace{-0.8em}
\subfigure[TNRD]{
    \includegraphics[width=0.16\textwidth]{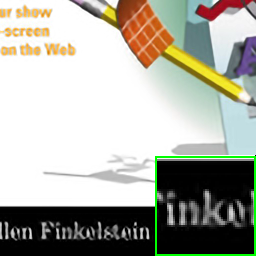}
    }\hspace{-0.8em}
\subfigure[VDSR]{
    \includegraphics[width=0.16\textwidth]{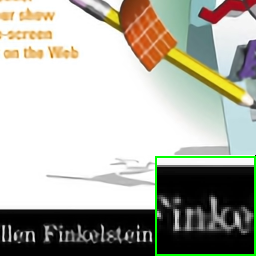}
    }\hspace{-0.8em}
\subfigure[DnCNN]{
    \includegraphics[width=0.16\textwidth]{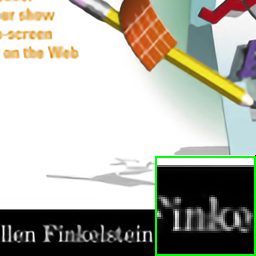}
    }\hspace{-0.8em}
\subfigure[MemNet]{
	\includegraphics[width=0.16\textwidth]{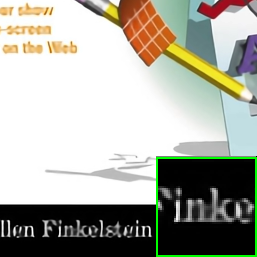}
}\hspace{-0.8em}
\subfigure[Ours]{
    \includegraphics[width=0.16\textwidth]{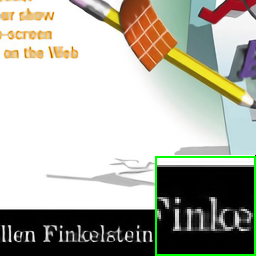}
    }
    \\
        \caption{SR results for \emph{13th} image of Set14 for bicubic downsampling and scaling factor 3. The PSNR results:(b) TNRD \cite{TNRD} (27.08 dB);(c) VDSR \cite{VDSR} (27.86 dB); (d) DnCNN \cite{Zhang:TIP17} (28.21 dB); (e) MemNet \cite{Tai:ICCV17} (28.92 dB);  (f) Ours (\textbf{28.99} dB).}
    \label{fig:SR1}

\end{figure*}

\begin{figure*}[!tbh]
\renewcommand{\arraystretch}{0.4}
\centering
\subfigure[Original]{
    \includegraphics[width=0.16\textwidth]{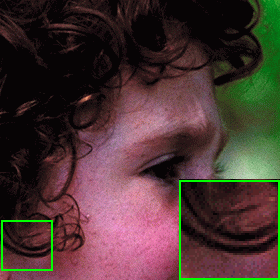}
    }\hspace{-0.8em}
\subfigure[TNRD]{
    \includegraphics[width=0.16\textwidth]{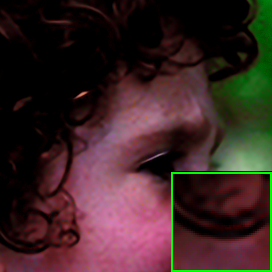}
    }\hspace{-0.8em}
\subfigure[VDSR]{
    \includegraphics[width=0.16\textwidth]{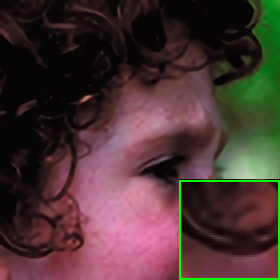}
    }\hspace{-0.8em}
\subfigure[DnCNN]{
    \includegraphics[width=0.16\textwidth]{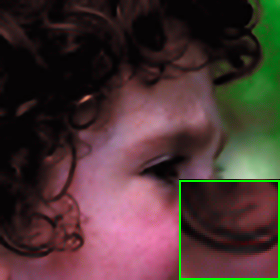}
    }\hspace{-0.8em}
\subfigure[MemNet]{
	\includegraphics[width=0.16\textwidth]{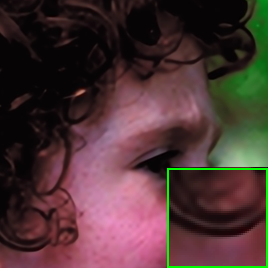}
}\hspace{-0.8em}
\subfigure[Ours]{
    \includegraphics[width=0.16\textwidth]{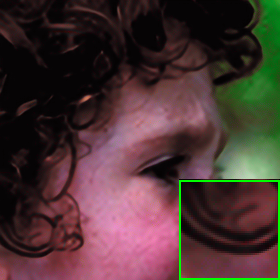}
    }
    \\
        \caption{Results for \emph{6th} image of Set14 for bicubic downsampling and scaling factor 4. The PSNR results: (b)TNRD \cite{TNRD} (32.51 dB);  (c)VDSR \cite{VDSR} (32.70 dB); (d)DnCNN\cite{Zhang:TIP17} (32.36 dB); (e)MemNet\cite{Tai:ICCV17} ({32.79} dB); (f)Ours (\textbf{32.88} dB).}
    \label{fig:SR2}

\end{figure*}

\begin{figure*}[!tbh]
\renewcommand{\arraystretch}{0.4}
\centering
\subfigure[Original]{
    \includegraphics[width=0.188\textwidth]{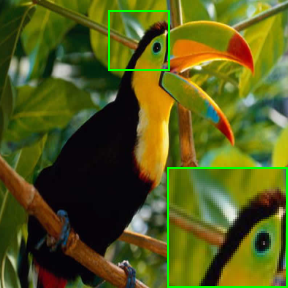}
    }\hspace{-0.8em}
\subfigure[NCSR]{
    \includegraphics[width=0.188\textwidth]{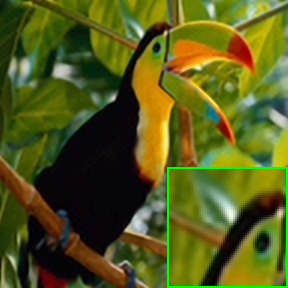}
    }\hspace{-0.8em}
\subfigure[DD-CNN]{
    \includegraphics[width=0.188\textwidth]{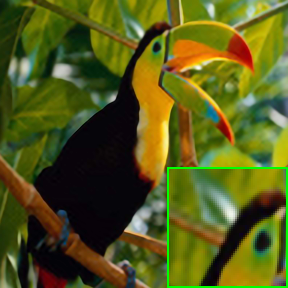}
    }\hspace{-0.8em}
\subfigure[MemNet]{
	\includegraphics[width=0.188\textwidth]{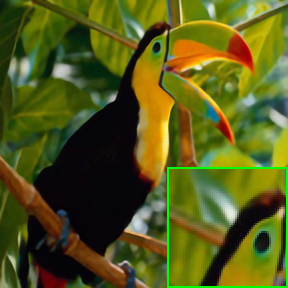}
}\hspace{-0.8em}
\subfigure[Ours]{
    \includegraphics[width=0.188\textwidth]{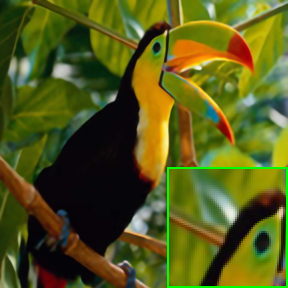}
    }
    \\
    \caption{SR results for \emph{2th} image of Set5 for Gaussian downsampling with scaling factor 3. The PSNR results: (b) NCSR \cite{NCSR} (35.74 dB);  (c) DD-CNN \cite{Zhang:CVPR17} (35.93 dB); (d) MemNet\cite{Tai:ICCV17} (36.92 dB); (e) Ours (\textbf{37.75} dB).}
    \label{fig:SR3}

\end{figure*}

\section{Conclusion}

In this paper, we have proposed a novel deep neural network for general image restoration (IR) tasks. Different from current deep network based IR methods, where the observation models are generally ignored, we construct the deep network based on a denoising-based IR framework. To this end, we first developed an efficient algorithm for solving the denoising-based IR method and then unfolded the algorithm into a deep network, which is composed of multiple denoising modules interleaved with back-projection modules for data consistencies. A DCNN-based denoiser exploiting multi-scale redundancies of natural images was developed. Therefore, the proposed deep network can exploit not only the effective DCNN denoising prior but also the prior of the observation model. Experimental results show that the proposed method can achieve very competitive and often state-of-the-art results on several IR tasks, including image denoising, deblurring and super-resolution.

\appendix
\section*{Convergence}
\begin{thm}[] Consider the energy function
$$\xi(\tx,\tv) := \frac{1}{2}\|\ty-\tA\tx\|_2^2 + \frac{\eta}{2}\|\tx-\tv\|_2^2 + \lambda J(\tv).$$
Assume that $\xi$ is lower bounded and coercive\footnote{$\xi(\tx,\tv)\to\infty$ whenever $\|(\tx,\tv)\|\to\infty$.}.
For Algorithm 1, $(\tx^{(t)}, \tv^{(t)})$ has a subsequence that converges to a stationary point of the the energy function provided that the denoiser $f(\cdot)$ satisfies the sufficient descent condition:
\begin{align}\label{fcond1}
&\frac{\eta}{2}||\tx-\tv||_2^2 + \lambda J(\tv)-\frac{\eta}{2}||\tx-f(\tx)||_2^2 - \lambda J(f(\tx))\nonumber\\
&\ge c_2\|\tilde\nabla_{\tv}\xi(\tx,\tv)\|_2^2,
\end{align}
where $c_2>0$ and $\tilde\nabla_{\tv}\xi(\tx,\cdot)$ is a continuous limiting subgradient of $\xi$.
\end{thm}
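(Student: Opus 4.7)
The plan is to follow the standard descent-plus-summability template for proximal alternating schemes on nonconvex/nonsmooth energies, with the $\tx$-block handled by the single gradient step and the $\tv$-block handled by the hypothesis \eqref{fcond}. The two main ingredients I will need are a sufficient-descent inequality for each block and the coercivity/lower-boundedness assumption; everything else (summability, boundedness, subsequence extraction, identification of the limit) is then mechanical.

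First I would handle the $\tx$-update. The map $\tx\mapsto \xi(\tx,\tv^{(t)})$ is a convex quadratic with gradient Lipschitz constant $L:=\|\tA^\top\tA\|+\eta$, and by construction $\tx^{(t+1)}=\tx^{(t)}-\delta\nabla_{\tx}\xi(\tx^{(t)},\tv^{(t)})$. Choosing $\delta\in(0,2/L)$ and applying the descent lemma yields
\begin{equation*}
\xi(\tx^{(t+1)},\tv^{(t)}) \le \xi(\tx^{(t)},\tv^{(t)}) - c_1\|\tx^{(t+1)}-\tx^{(t)}\|_2^2,
\end{equation*}
with $c_1=(1/\delta-L/2)>0$. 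For the $\tv$-update, I note that the data-fidelity term $\tfrac{1}{2}\|\ty-\tA\tx\|_2^2$ cancels between $\xi(\tx,\tv)$ and $\xi(\tx,f(\tx))$, so \eqref{fcond} is precisely the statement
\begin{equation*}
\xi(\tx^{(t+1)},\tv^{(t)})-\xi(\tx^{(t+1)},\tv^{(t+1)}) \ge c_2\|\tilde\nabla_{\tv}\xi(\tx^{(t+1)},\tv^{(t)})\|_2^2.
\end{equation*}
Adding the two descent inequalities gives a single monotone-decrease bound
\begin{equation*}
\xi^{(t)}-\xi^{(t+1)} \ge c_1\|\tx^{(t+1)}-\tx^{(t)}\|_2^2 + c_2\|\tilde\nabla_{\tv}\xi(\tx^{(t+1)},\tv^{(t)})\|_2^2.
\end{equation*}

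Next I would telescope this over $t=0,1,\dots$ and invoke the lower bound on $\xi$ to conclude that both $\|\tx^{(t+1)}-\tx^{(t)}\|_2\to 0$ and $\|\tilde\nabla_{\tv}\xi(\tx^{(t+1)},\tv^{(t)})\|_2\to 0$. Since $\xi$ is coercive and the sequence $\{\xi^{(t)}\}$ is bounded above by $\xi^{(0)}$, the iterates $(\tx^{(t)},\tv^{(t)})$ lie in a sublevel set of $\xi$, hence are bounded. Bolzano--Weierstrass then yields a subsequence $(\tx^{(t_k)},\tv^{(t_k)})\to(\tx^\ast,\tv^\ast)$; because $\tx^{(t+1)}-\tx^{(t)}\to 0$, the shifted subsequence $\tx^{(t_k+1)}$ converges to the same $\tx^\ast$. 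To identify $(\tx^\ast,\tv^\ast)$ as a stationary point, I use that $\nabla_{\tx}\xi(\tx^{(t)},\tv^{(t)})=\delta^{-1}(\tx^{(t)}-\tx^{(t+1)})\to 0$, so by continuity of $\nabla_{\tx}\xi$ we get $\nabla_{\tx}\xi(\tx^\ast,\tv^\ast)=0$; similarly, continuity of the limiting subgradient $\tilde\nabla_{\tv}\xi$ (granted in the hypothesis) together with $\tilde\nabla_{\tv}\xi(\tx^{(t_k+1)},\tv^{(t_k)})\to 0$ gives $\tilde\nabla_{\tv}\xi(\tx^\ast,\tv^\ast)=0$.

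The main obstacle I expect is the passage from $\tilde\nabla_{\tv}\xi(\tx^{(t+1)},\tv^{(t)})\to 0$ to stationarity of the limit, because for nonsmooth $J$ the set-valued subgradient map is only outer semicontinuous in general; this is precisely why the theorem statement builds in the assumption that $\tilde\nabla_{\tv}\xi$ is a \emph{continuous} limiting subgradient, which lets the argument go through cleanly. A secondary subtlety is that in the surrogate forms of $f$ listed in items 2--4 of the discussion, \eqref{fcond} is only asymptotic (it holds with $\tv^+$ on the right-hand side); there one first shows $\|\tv^{(t)}-\tv^{(t+1)}\|_2\to 0$ from the surrogate inequality and the descent/summability argument, and then recovers \eqref{fcond} in the limit, so the same subsequence-convergence conclusion survives.
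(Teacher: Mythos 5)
Your proposal is correct and follows essentially the same route as the paper's proof: the descent lemma for the single gradient step on $\tx$ with $c_1=\frac{1}{\delta}-\frac{\|\tA^{\top}\tA\|+\eta}{2}$, the hypothesis \eqref{fcond1} as the descent inequality for the $\tv$-step, telescoping plus lower-boundedness to get summability, coercivity for boundedness and subsequence extraction, and continuity of $\nabla_{\tx}\xi$ and of the limiting subgradient $\tilde\nabla_{\tv}\xi$ (together with $\|\tx^{(t)}-\tx^{(t+1)}\|_2\to 0$) to identify the limit as a stationary point. Your closing remarks on outer semicontinuity and the asymptotic form of the condition also mirror the paper's discussion of cases 2--4.
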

\begin{proof}
Since $\nabla_{\tx}\xi(\tx,\tv)$ is Lipschitz continuous with constant $\|\tA^{\top}\tA\|+\eta$, it is well known that the gradient step on $\tx$ with step size $\delta\in(0,\frac{2}{\|\tA^{\top}\tA\|+\eta})$ satisfies the descent property
\begin{align}\label{desx}
\xi(\tx^{(t)},\tv^{(t)})-\xi(\tx^{(t+1)},\tv^{(t)}) \ge c_1 \|\tx^{(t)}-\tx^{(t+1)} \|_2^2, 
\end{align}
where $c_1:=\frac{1}{\delta}-\frac{\|\tA^{\top}\tA\|+\eta}{2}>0$.
By assumption, the $\tv$-step satisfies
\begin{align}\label{desv}
\xi(\tx^{(t+1)},\tv^{(t)})-\xi(\tx^{(t+1)},\tv^{(t+1)}) \ge c_2\|\tilde\nabla_{\tv}\xi(\tx^{(t+1)},\tv^{(t)})\|_2^2.
\end{align}
Since $\xi(\tx,\tv)$ is coercive and, by \eqref{desx} and \eqref{desv}, $\xi(\tx^{(t)},\tv^{(t)})$ is monotonically nonincreasing, the sequence $(\tx^{(t)},\tv^{(t)})_{t=0,1,2,\ldots}$ is bounded (otherwise, it would cause the contradiction $\xi(\tx^{(t)},\tv^{(t)})\to \infty$), so it has a convergent subsequence $(\tx^{(t_k)},\tv^{(t_k)})_{k=0,1,\ldots}\overset{k}{\to}(\tx^{*},\tv^{*})$.
Since $\xi(\tx,\tv)$ is lower bounded, adding (9) and (10) yields
\begin{align}
&\xi(\tx^{(t)},\tv^{(t)})-\xi(\tx^{(t+1)},\tv^{(t+1)})\nonumber\\
&\ge c_1 \|\tx^{(t)}-\tx^{(t+1)} \|_2^2+c_2\|\tilde\nabla_{\tv}\xi(\tx^{(t+1)},\tv^{(t)})\|_2^2.
\end{align}
and, by telescopic sum over $t=0,1,\ldots$ and  by monotonicity and boundedness of $\xi(\tx^{(t)},\tv^{(t)})$, we have the summability properties $\sum_t\|\tx^{(t)}-\tx^{(t+1)} \|_2^2 <\infty$ and $\sum_t\|\tilde\nabla_{\tv}\xi(\tx^{(t+1)},\tv^{(t)})\|_2^2 <\infty$, from which we conclude
\begin{align}
&\lim_{t\to\infty}\|\tx^{(t)}-\tx^{(t+1)} \|_2=0,\label{txcvg}\\
&\lim_{t\to\infty}\|\tilde\nabla_{\tv}\xi(\tx^{(t+1)},\tv^{(t)})\|_2 = 0.
\end{align}
Based on $\tx^{(t+1)}-\tx^{(t)}=\delta \nabla_{\tx} \xi(\tx^{(t)},\tv^{(t)})$, we get
$\nabla_{\tx} \xi(\tx^{*},\tv^{*}) =\lim_{k\to\infty}\nabla_{\tx} \xi(\tx^{(t_k)},\tv^{(t_k)}) =0$, where we have used the continuity of  $\nabla_{\tx} \xi(\tx,\tv)$ in $\tx$. Also, $\lim_{k\to\infty}\tilde\nabla_{\tv} \xi(\tx^{(t_k)},\tv^{(t_k)}) =\lim_{k\to\infty}\tilde\nabla_{\tv} \xi(\tx^{(t_k+1)},\tv^{(t_k)})=0$, where the first ``$=$'' follows from the continuity of $\nabla_{\tv} \xi(\tx,\tv)=2\mu(\tv-\tx)+\lambda\tilde \nabla J(\tv)$ in $\tx$ and \eqref{txcvg}. Therefore, $(\tx^{*},\tv^{*})$ is a stationary point of $\xi$.
\end{proof}

\ifCLASSOPTIONcaptionsoff
  \newpage
\fi

\bibliographystyle{splncs}
\bibliography{DNN}

\begin{thebibliography}{10}
\providecommand{\url}[1]{\texttt{#1}}
\providecommand{\urlprefix}{URL }
\providecommand{\doi}[1]{https://doi.org/#1}

\bibitem{Alain:14}
Alain, G., Bengio, Y.: What regularized auto-encoders learn from the
  data-generating distribution. Journal of Machine Learning Research
  \textbf{15},  3743--3773 (2014)

\bibitem{Bigdeli:arXiv}
Bigdeli, S.A., Zwicker, M.: Image restoration using autoencoding priors. In:
  arXiv:1703.09964v1 (2017)

\bibitem{ISTA}
Bioucas-Dias, J.M., Figueiredo, M.A.: A new twist: Two-step iterative
  shrinkage/thresholding algorithms for image restoration. IEEE Transactions on
  Image Processing  \textbf{16}(12),  2992--3004 (2007)

\bibitem{KL:07}
Bolte, J., Daniilidis, A., Lewis, A.: The lojasiewicz inequality for nonsmooth
  subanalytic functions with applications to subgradient dynamical systems.
  SIAM Journal on Optimization  \textbf{17}(4),  1205--1223 (2007)

\bibitem{Brifman:ICIP16}
Brifman, A., Romano, Y., Elad, M.: Turning a denoiser into a super-resolver
  using plug and play priors. In: 2016 IEEE International Conference on Image
  Processing (ICIP). pp. 1404--1408 (2016)

\bibitem{NLM}
Buades, A., Coll, B., Morel, J.M.: A non-local algorithm for image denoising.
  In: Proc. of the IEEE CVPR. pp. 60--65 (2005)

\bibitem{MLP:CVPR12}
Burger, H.C., Schuler, C.J., Harmeling, S.: Image denoising: can plain neural
  networks compete with bm3d? In: Proc. of IEEE CVPR. pp. 2392--2399 (2012)

\bibitem{Chan:17}
Chan, S.H., Wang, X., Elgendy, O.A.: Plug-and-play admm for image restoration:
  Fixed-point convergence and applications. IEEE Transactions on Computational
  Imaging  \textbf{3}(1),  84--98 (2017)

\bibitem{TNRD}
Chen, Y., Pock, T.: Trainable nonlinear reaction diffusion: A flexible
  framework for fast and effective image restoration. IEEE Transactions on
  Pattern Analysis and Machine Intelligence  \textbf{39}(6),  1256--1272 (2017)

\bibitem{BM3D}
Dabov, K., Foi, A., Katkovnik, V., Egiazarian, K.: Image denoising by sparse
  3-d transform-domain collaborative filtering. IEEE Transactions on image
  processing  \textbf{16}(8),  2080--2095 (2007)

\bibitem{IDDBM3D}
Danielyan, A., Katkovnik, V., Egiazarian, K.: Bm3d frames and variational image
  deblurring. IEEE Transactions on image processing  \textbf{21}(4),
  1715--1728 (2012)

\bibitem{SRCNN}
Dong, C., Loy, C.C., He, K., Tang, X.: Learning a deep convolutional network
  for image super-resolution. In: European Conference on Computer Vision. pp.
  184--199. Springer (2014)

\bibitem{FSRCNN}
Dong, C., Loy, C.C., Tang, X.: Accelerating the super-resolution convolutional
  neural network. In: European Conference on Computer Vision. pp. 391--407.
  Springer (2016)

\bibitem{CSR}
Dong, W., Li, X., Zhang, L., Shi, G.: Sparsity-based image denoising via
  dictionary learning and structural clustering. In: Proc. of the IEEE CVPR.
  pp. 457--464 (2011)

\bibitem{WNNM}
Dong, W., Li, X., Zhang, L., Shi, G.: Weighted nuclear norm minimization with
  application to image denoising. In: Proc. of the IEEE CVPR. pp. 2862--2869
  (2014)

\bibitem{LASSC}
Dong, W., Shi, G., Li, X.: Nonlocal image restoration with bilateral variance
  estimation: a low-rank approach. IEEE Transactions on image processing
  \textbf{22}(2),  700--711 (2013)

\bibitem{Dong:IJCV15}
Dong, W., Shi, G., Ma, Y., Li, X.: Image restoration via simultaneous sparse
  coding: Where structured sparsity meets gaussian scale mixture. International
  Journal of Computer Vision pp. 1--16 (2015)

\bibitem{Dong:TIP11}
Dong, W., Zhang, L., Shi, G., Wu, X.: Image deblurring and super-resolution by
  adaptive sparse domain selection and adaptive regularization. IEEE
  Transactions on image processing  \textbf{20}(7),  1838--1857 (2011)

\bibitem{NCSR}
Dong, W., Zhang, L., Shi, G., Li, X.: Nonlocally centralized sparse
  representation for image restoration. IEEE Transactions on Image Processing
  \textbf{22}(4),  1620--1630 (2013)

\bibitem{KSVD}
Elad, M., Aharon, M.: Image denoising via sparse and redundant representation
  over learned dictionaries. IEEE Transactions on Image Processing
  \textbf{15}(12),  3736--3745 (2006)

\bibitem{Freeman:02}
Freeman, W.T., Jones, T.R., Pasztor, E.C.: Examplebased super-resolution.
  Computer Graphics and Applications  \textbf{22}(2),  56--65 (2002)

\bibitem{Gao:TIP12}
Gao, X., Zhang, K., Tao, D., Li, X.: Image super-resolution with sparse
  neighbor embedding. Image Processing, IEEE Transactions on  \textbf{21}(7),
  3194--3205 (2012)

\bibitem{LISTA}
Gregor, K., LeCun, Y.: Learning fast approximations of sparse coding. In: Proc.
  of the IEEE ICML (2010)

\bibitem{Han:CVPR18}
Han, W., Chang, S., Liu, D., Yu, M., Witbrock, M., Huang, T.S.: Image
  super-resolution via dual-state recurrent networks. In: Proc. of the IEEE
  CVPR. pp. 2790--2798 (2018)

\bibitem{He:ICCV15}
He, K., Zhang, X., Ren, S., Sun, J.: Delving deep into rectifiers: Surpassing
  human-level performance on imagenet classification. In: Proc. of IEEE ICCV.
  pp. 1026--1034 (2015)

\bibitem{Resnet}
He, K., Zhang, X., Ren, S., Sun, J.: Deep residual learning for image
  recognition. In: Proc. of the IEEE CVPR. pp. 770--778 (2016)

\bibitem{VDSR}
Kim, J., Lee, J.K., Lee, K.M.: Accurate image super-resolution using very deep
  convolutional networks. In: IEEE Conference on Computer Vision and Pattern
  Recognition. pp. 1646--1654 (2016)

\bibitem{Kim:CVPR17}
Kim, Y., Jung, H., Min, D., Sohn, K.: Deeply aggregated alternating
  minimization for image restoration. In: Proc. of the IEEE CVPR. pp. 284--292
  (2017)

\bibitem{ADAM}
Kingma, D., Ba, J.: Adam: a method for stochastic optimization. In: Proc. of
  ICLR (2014)

\bibitem{Alexnet}
Krizhevsky, A., Sutskever, I., Hinton, G.E.: Imagenet classification with deep
  convolutional neural networks. In: In Advances in Neural Information
  Processing Systems. pp. 1097--1105 (2012)

\bibitem{Ledig:CVPR17}
Ledig, C., Theis, L., Huszar, F., Caballero, J., Cunningham, A.:
  Photo-realistic single image super-resolution using a generative adversarial
  network. In: Proc. of the IEEE CVPR. pp. 105--114 (2017)

\bibitem{Lee:arXiv}
Lee, J., Panageas, I., Piliouras, G., Simchowitz, M., Jordan, M., Recht, B.:
  First-order methods almost always avoid saddle points. arXiv:1710.07406

\bibitem{Levin:CVPR09}
Levin, A., Weiss, Y., Durand, F., Freeman, W.T.: Understanding and evaluating
  blind deconvolution algorithms. In: Proc. of CVPR. pp. 1964--1971 (2009)

\bibitem{EDSR}
Lim, B., Son, S., Kim, H., Nah, S., Lee, K.M.: Enhanced deep residual networks
  for single image super-resolution. In: Proc. of the IEEE CVPR Workshops. pp.
  1132--1140 (2017)

\bibitem{Liu:Arxiv18}
Liu, L., Chen, J., Fieguth, P., Zhao, G., Chellappa, R., Pietikainen, M.: A
  survey of recent advances in texture representation. Arxiv: 1801.10324v1
  (2018)

\bibitem{Liu:PR17}
Liu, L., Fieguth, P., Guo, Y., Wang, X., Pietikainen, M.: Local binary features
  for texture classification: taxonomy and experimental study. Pattern
  Recognition  \textbf{62},  135--160 (2017)

\bibitem{Liu:detection}
Liu, L., Ouyang, W., Wang, X., Fieguth, P., Chen, J., Liu, X., Pietikainen, M.:
  Deep learning for generic object detection: A survey. Arxiv: 1809.02165
  (2018)

\bibitem{FCN:CVPR15}
Long, J., Shelhamer, E., Darrell, T.: Fully convolutional networks for semantic
  segmentation. In: Proc. of IEEE CVPR. pp. 3431--3440 (2015)

\bibitem{Mairal:TIP08}
Mairal, J., Elad, M., Sapiro, G.: Sparse representation for color image
  restoration. IEEE Transactions on image processing  \textbf{17}(1),  53--69
  (2008)

\bibitem{TVSR}
Marquina, A., Osher, S.J.: Image super-resolution by tv-regularization and
  bregman iteration. Journal of Scientific Computing  \textbf{37}(3),  367--382
  (2008)

\bibitem{Meinhardt:ICCV17}
Meinhardt, T., Moeller, M., Hazirbas, C., Cremers, D.: Learning proximal
  operators: using denoising networks for regularizing inverse imaging
  problems. In: Proc. of the IEEE ICCV. pp. 1799--1808 (2017)

\bibitem{Osher:TV05}
Osher, S., Burger, M., Goldfarb, D., Xu, J., Yin, W.: An iterative
  regularization method for total variation-based image restoration. Multiscale
  Modeling and Simulation  \textbf{4}(2),  460--489 (2005)

\bibitem{Sharpmask}
Pinheiro, P.O., Lin, T.Y., Collobert, R., Dollar, P.: Learning to refine object
  segments. In: Proc. of ECCV (2016)

\bibitem{FRCNN:PAMI17}
Ren, S., He, K., Girshick, R., Sun, J.: Faster r-cnn: towards real-time object
  detection with region proposal networks. IEEE Transactions on Pattern
  Analysis and Machine Intelligence  \textbf{39}(6),  1137--1149 (2017)

\bibitem{RED:17}
Romano, Y., Elad, M., Milanfar, P.: The little engine that could:
  Regularization by denoising (red). SIAM Journal on Imaging Sciences
  \textbf{10}(4),  1804--1844 (2017)

\bibitem{UNet}
Ronneberger, O., Fischer, P., Brox, T.: U-net: Convolutional networks for
  biomedical image segmentation. In: In International Conference on Medical
  Image Computing and Computer-Assisted Intervention. pp. 234--241 (2015)

\bibitem{Roth:IJCV09}
Roth, S., Black, M.J.: Fields of experts. International Journal of Computer
  Vision  \textbf{82}(2),  205--229 (2009)

\bibitem{Schmidt:CVPR14}
Schmidt, U., Roth, S.: Shrinkage fields for effective image restoration. In:
  Proc. of the IEEE CVPR. pp. 2774--2781 (2014)

\bibitem{Tai:CVPR17}
Tai, Y., Yang, J., Liu, X.: Image super-resolution via deep recursive residual
  network. In: Proc. of the IEEE CVPR. pp. 2790--2798 (2017)

\bibitem{Tai:ICCV17}
Tai, Y., Yang, J., Liu, X., Xu, C.: Memnet: a persistent memory network for
  image restoration. In: Proc. of the IEEE ICCV. pp. 4549--4557 (2017)

\bibitem{Teodoro:ICIP16}
Teodoro, A.M., Bioucas-Dias, J.M., Figueiredo, M.A.T.: Image restoration and
  reconstruction using variable splitting and class-adapted image priors. In:
  Proc. of IEEE ICIP. pp. 3518--3522 (2016)

\bibitem{Aplus}
Timofte, R., De~Smet, V., Van~Gool, L.: A+: Adjusted anchored neighborhood
  regression for fast super-resolution. In: Asian Conference on Computer
  Vision. pp. 111--126. Springer (2014)

\bibitem{Tong:ICCV17}
Tong, T., Li, G., Liu, X., Gao, Q.: Image super-resolution using dense skip
  connections. In: Proc. of the IEEE ICCV. pp. 4809--4817 (2017)

\bibitem{PPP:13}
Venkatakrishnan, S., Bouman, C., Chu, E., Wohlberg, B.: Plug-and-play priors
  for model based reconstruction. In: Proc. of IEEE Global Conference on Signal
  and Information Processing. pp. 945--948 (2013)

\bibitem{Yin:18}
Wang, Y., Yin, W., Zeng, J.: Global convergence of admm in nonconvex nonsmooth
  optimization. Journal of Scientific Computing  (2018)

\bibitem{Wang:CVPR15}
Wang, Z., Liu, D., Yang, J., Han, W., Huang, T.: Deep networks for image
  super-resolution with sparse prior. In: Proc. of the IEEE ICCV. pp. 370--378
  (2015)

\bibitem{Xin:NIPS16}
Xin, B., Wang, Y., Gao, W., Wipf, D.: Maximal sparsity with deep networks? In:
  In Advances in Neural Information Processing Systems (2016)

\bibitem{Xu:NIPS14}
Xu, L., Ren, J.S., Liu, C., Jia, J.: Deep convolutional neural network for
  image deconvolution. In: In Advances in Neural Information Processing Systems
  (2014)

\bibitem{Yin:13}
Xu, Y., Yin, W.: A block coordinate descent method for regularized multiconvex
  optimization with applications to nonnegative tensor factorization and
  completion. SIAM Journal on Imaging Sciences  \textbf{6}(3),  1758--1789
  (2013)

\bibitem{Yang:SR08}
Yang, J., Wright, J., Huang, T., Ma, Y.: Image super-resolution as sparse
  representation of raw image patches. In: Proc. of the IEEE CVPR. pp.~1--8
  (2008)

\bibitem{ADMM-Net}
Yang, Y., Sun, J., Li, H., Xu, Z.: Deep admm-net for compressive sensing mri.
  In: In Advances in Neural Information Processing Systems (2016)

\bibitem{Yu:TIP12}
Yu, G., Sapiro, G., Mallat, S.: Solving inverse problems with piecewise linear
  estimators: From gaussian mixture models to structured sparsity. IEEE
  Transactions on Image Processing  \textbf{21}(5),  2481--2499 (2012)

\bibitem{Zhang:TIP17}
Zhang, K., Zuo, W., Chen, Y., Meng, D., Zhang, L.: Beyond a gaussian denoiser:
  Residual learning of deep cnn for image denoising. IEEE Transactions on image
  processing  \textbf{26}(7),  3142--3155 (2017)

\bibitem{Zhang:CVPR17}
Zhang, K., Zuo, W., Gu, S., Zhang, L.: Learning deep cnn denoiser prior for
  image restoration. In: Proc. of the IEEE CVPR. pp. 2808--2817 (2017)

\bibitem{Zhang:TIP18}
Zhang, Y., Sun, L., Yan, C., Ji, X., Dai, Q.: Adaptive residual networks for
  high-quality image restoration. IEEE Trans. on Image Processing
  \textbf{27}(7),  3150--3163 (2018)

\bibitem{Zoran:ICCV11}
Zoran, D., Weiss, Y.: From learning models of natural image patches to whole
  image restoration. In: Proc. of the IEEE ICCV. pp. 479--486 (2011)

\end{thebibliography}

\end{document}